%% file: main.tex
\documentclass{article}
\input{tex/preamble}

\title{A Step in the Right Place:
Understanding and Optimizing Timestep Placement in Continuous Diffusion Language Models}

\author{
  Shutian Zheng$^{1,*}$ \quad
  Zhihong Yu$^{3,*}$ \quad
  Linye Wei$^{2,4}$ \quad
  Zeyang Zhang$^{5}$ \quad
  Meng Li$^{2,4,6,\dagger}$
  \\
  $^{1}$College of Engineering, Peking University
  \\
  $^{2}$Institute for Artificial Intelligence, Peking University
  \\
  $^{3}$Weiyang College, Tsinghua University
  \\
  $^{4}$School of Integrated Circuits, Peking University
  \\
  $^{5}$Institute for Interdisciplinary Information Sciences, Tsinghua University
  \\
  $^{6}$Beijing Advanced Innovation Center for Integrated Circuits
}

\hypersetup{
  colorlinks=true,
  citecolor=paperlink,
  linkcolor=paperlink,
  urlcolor=paperlink,
  pdftitle={A Step in the Right Place: Understanding and Optimizing Timestep Placement in Continuous Diffusion Language Models},
  pdfauthor={Shutian Zheng, Zhihong Yu, Linye Wei, Zeyang Zhang, Meng Li}
}
\date{}

\begin{document}
  \maketitle
  \fancyhead[L]{\small Preprint}
  \fancyhead[R]{}
  \renewcommand{\headrulewidth}{0.4pt}

  \begingroup
    \renewcommand{\thefootnote}{}
    \begin{NoHyper}
      \footnotetext{
        \textsuperscript{*}Equal contribution.
        \quad
        \textsuperscript{$\dagger$}Corresponding author: \texttt{meng.li@pku.edu.cn}.
      }
    \end{NoHyper}
    \addtocounter{footnote}{-1}
  \endgroup

  \input{sections/abstract}
  \input{sections/introduction}
  \input{sections/related_work}
  \input{sections/timestep_placement}
  \input{sections/method}

  \FloatBarrier
  \setcounter{topnumber}{2}
  \setcounter{totalnumber}{3}
  \input{sections/experiments}
  \input{sections/conclusion}
  \input{sections/ai_use_statement}
  \input{sections/ethics_statement}
  \input{sections/reproducibility_statement}

  \bibliography{references}
  \bibliographystyle{steptune_references}

  \clearpage
  \restoreappendixlayout
  \input{sections/appendix}
\end{document}

%% file: tex/preamble.tex
\usepackage{steptune_preprint,times}
\usepackage{amsmath,amssymb,mathtools,amsthm}
\usepackage{graphicx}
\usepackage{xcolor}
\usepackage{microtype}
\usepackage{booktabs,multirow}
\usepackage{enumitem}
\usepackage{wrapfig}
\usepackage{ragged2e}
\usepackage{caption}
\makeatletter
\let\steptuneoriginalfloatplacement\@addtocurcol
\newcommand{\restoreappendixlayout}{%
  \let\@addtocurcol\steptuneoriginalfloatplacement
  \setlength{\columnsep}{0pt}%
  \setlength{\intextsep}{0pt}%
  \captionsetup{skip=10pt}%
}
\makeatother
\usepackage{flafter}
\usepackage{placeins}
\usepackage{hyperref}
\usepackage{xurl}
\definecolor{paperlink}{rgb}{0,0.08,0.45}
\input{tex/macros}

%% file: tex/macros.tex
\newcommand{\method}{StepTune}
\newsavebox{\stepLeftTable}
\newsavebox{\stepRightTable}
\newtheorem{theorem}{Theorem}
\newtheorem{proposition}[theorem]{Proposition}

%% file: sections/abstract.tex
\begin{abstract}
  Continuous diffusion language models represent an emerging
  paradigm for language modeling. They generate text in parallel by iteratively
  refining continuous representations and mapping them back to tokens through
  a decoder. However, realizing the efficiency advantage of parallel generation
  requires maintaining generation quality with only a few denoising steps.
  Even when the number of steps is fixed, their placement along the denoising
  trajectory remains flexible. In this paper, we demonstrate that changing timestep placement alone can
  substantially alter the final latent representations and thereby affect the
  decoded token distributions. Building on these findings, we introduce \textbf{\method{}}, an algorithm
  that searches for timestep placements that improve generation quality. The search is conducted on a small calibration set and guided
  by a language-aware objective, with the model and sampling budget fixed. Once calibrated, the timestep placement is fixed and reused
  across inputs.
  Experiments demonstrate consistent improvements in unconditional generation
  perplexity and conditional task metrics across all evaluated sampling budgets.
  In particular, StepTune enables 16-step sampling with a frozen model to
  outperform its native 64-step baseline on both XSum summarization and WMT14
  translation and reduces end-to-end latency by approximately $3.8\times$.
  These results establish timestep placement as a practical means of improving
  the quality--efficiency trade-off in continuous diffusion language models.

\end{abstract}

%% file: sections/introduction.tex
\section{Introduction}
\label{sec:introduction}

\begingroup
\setlength{\columnsep}{6pt}
\setlength{\intextsep}{3pt}

\begin{wrapfigure}{r}{0.44\textwidth}
  \vspace{-0.5\baselineskip}
  \centering

  \includegraphics[
  width=\linewidth,
  keepaspectratio
  ]{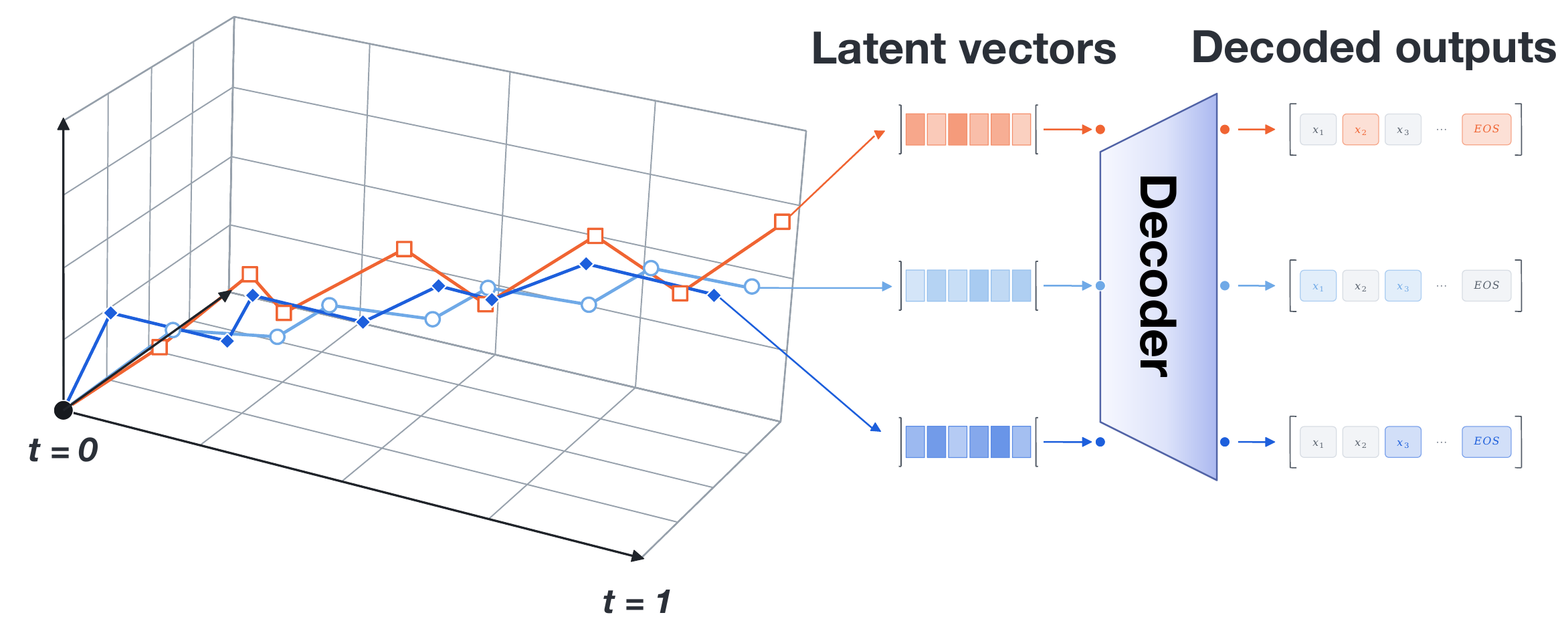}

  \captionsetup{skip=3pt}
  \caption{\textbf{Timestep placement shapes generation.}
  Different timestep placements alter the sampling trajectory,
  eventually leading to different decoded outputs.}
  \label{fig:intro-gridcal}
\end{wrapfigure}

Autoregressive language models generate text one token at a time,
conditioned on the preceding sequence \citep{radford2019language}.
Diffusion language models offer an alternative by iteratively
denoising a sequence, allowing multiple positions to be updated
in parallel \citep{li2022diffusion,sahoo2024simple}.
Within this paradigm, continuous diffusion language models
(continuous DLMs), such as ELF \citep{hu2026elf} and
AURORA \citep{liang2026aurora}, map tokens into continuous
representations, perform denoising in this space, and decode
the resulting representations back into tokens.
However, generating a sequence in parallel still
requires sequential rounds of refinement. Realizing the efficiency advantage of parallel generation
therefore requires maintaining generation quality with only
a few denoising steps, measured by the number of function
evaluations (NFE).
\par
\endgroup

\begingroup
\setlength{\textfloatsep}{12pt}

\begin{figure}[t]
  \centering

  \includegraphics[width=\textwidth]{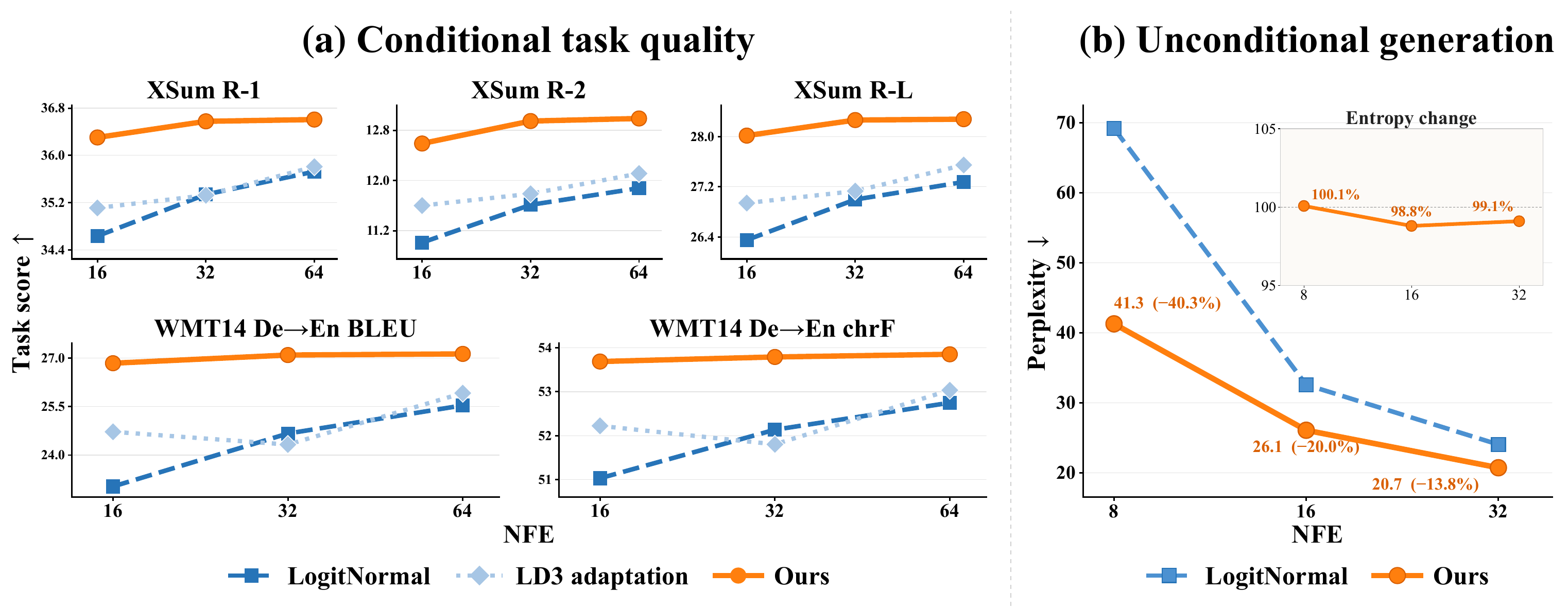}

  \caption{
  \textbf{StepTune improves generation quality with fewer denoising steps.}
  (a) On XSum summarization and WMT14 De$\rightarrow$En translation,
  16-step sampling outperforms the native 64-step baseline.
  (b) Unconditional generation with the official ELF SDE sampler
  shows consistently lower perplexity at matched NFE.
  The inset reports unigram entropy relative to LogitNormal
  ($100\%$).
  }
  \label{fig:main-generation}

\end{figure}

At a limited NFE budget, the placement of denoising steps
along the denoising trajectory remains flexible.
With the model and solver unchanged,  different timestep
placements
allocate computation differently between the initial noise and the final
representation. Prior work on image diffusion has demonstrated the benefits of
optimizing this allocation \citep{sabour2024align,tong2025ld3}.
For language generation, we seek to determine whether timestep
placement offers similar benefits for token distributions
and generation quality.
This raises two central questions:
\textit{(i) How much control does timestep placement provide
over the final token distributions?
(ii) How can we use this control to improve generation
at a fixed NFE budget?}

To address question~(i), we empirically investigate the effects
of timestep placement in Section~3.
We first demonstrate that changing timestep placement alone
can substantially alter the final latent representations,
which the decoder subsequently maps to token distributions
(Figure~\ref{fig:intro-gridcal}).
To better assess differences in token distributions, we use
a high-NFE trajectory from the same model as a numerical reference.
It uses the same input and initial noise but takes finer
denoising steps to obtain a more accurate numerical solution.

Using this reference, our analysis in
Section~\ref{sec:decoder-sensitivity} reveals two distinct regimes.
When latent deviations from the reference are large, latent
error is strongly positively correlated with the KL divergence
between decoded token distributions: smaller latent errors
tend to correspond to smaller KL.
As the latent representations
approach the reference, the correlation between latent error
and KL drops sharply. Because the decoder's sensitivity to error direction becomes
important at small scales, latent distance alone no longer
adequately captures deviations in decoded token distributions.

These findings provide a bridge from question~(i)
to question~(ii).
Finding improved timestep placements is a search problem
with infinitely many possibilities in a continuous space,
so an objective is needed to guide the search.
Euclidean and perceptual distances to reference outputs
have been used as optimization objectives in image diffusion
sampling \citep{shaul2024bespoke,tong2025ld3,frankel2025s4s}.
For language generation, however, image-specific perceptual
metrics such as LPIPS \citep{zhang2018unreasonable}
are not directly applicable.
Our analysis also shows that latent distance alone cannot
adequately capture changes in token distributions.

We therefore draw on the complementary behavior observed
above to construct a joint discrepancy.
Intuitively, the latent term encourages movement toward
the reference, providing guidance at larger deviations
where latent error and KL are strongly correlated.
Closer to the reference, the KL term distinguishes changes
in token probabilities that latent distance alone cannot capture.
Together, these terms favor representations that are close
to the reference and yield similar token distributions.
For timestep placement $\theta$, let $A(\theta)$ denote
latent error and $K(\theta)$ denote decoder KL.
With positive weights $w_A$ and $w_K$, we define
\begin{equation}
  F(\theta)
  =
  w_A A(\theta) + w_K K(\theta).
  \label{eq:intro-objective}
\end{equation}

Building on this objective, we now turn to question~(ii)
and introduce \textbf{\method{}}, an algorithm that searches
for improved timestep placements on a small calibration set
with the model, solver, and NFE fixed.
StepTune proceeds in three stages
(Section~\ref{sec:method}).
First, we compress the space of timestep placements through
a compact parameterization of step allocation or a finite
set of candidate placements to make the search tractable.
Second, we use $F$ to guide the search toward regions
with the potential to improve generation quality,
subject to constraints evaluated on the calibration set.
Third, we select the final timestep placement based on
calibration results and freeze it for reuse.

By changing only timestep placement, StepTune delivers consistent improvements in generation quality. On ELF \citep{hu2026elf}, it improves unconditional generation perplexity and conditional task metrics across all evaluated NFE budgets (Figure~\ref{fig:main-generation}). In particular, 16-step sampling outperforms the native 64-step baseline on both XSum summarization \citep{Narayan2018DontGM} and WMT14 translation \citep{wmt14}, achieving an approximately $3.8\times$ end-to-end speedup with the same frozen model. These gains are not limited to a single continuous DLM architecture. On AURORA \citep{liang2026aurora}, StepTune also enables 16-step sampling to outperform the native 32-step baseline on XSum summarization. For unconditional generation, it reduces perplexity by 26.9\% at the same 16-step budget. Together, these results demonstrate that optimizing timestep placement can improve both generation quality and sampling efficiency across distinct continuous DLM architectures.

Our main contributions are threefold.
\begin{enumerate}[leftmargin=*]
  \item \textbf{Understanding timestep placement.}
  We show how timestep placement alters terminal representations and why latent distance alone can fail to capture changes in decoded token distributions.

  \item \textbf{Language-aware calibration.}
  We introduce StepTune to optimize timestep placement using $F$ under quality constraints and freeze the selected schedule for reuse across inputs.

  \item \textbf{Improved quality and efficiency.}
  We demonstrate gains across continuous DLM architectures, achieving better generation quality with fewer denoising steps and no model updates.
\end{enumerate}

\par
\endgroup

%% file: sections/related_work.tex
\section{Related Work}

\paragraph{Diffusion language models.}
Diffusion language models generate text through iterative denoising.
Discrete DLMs define corruption and denoising directly in token space
\citep{austin2021structured,sahoo2024simple,nie2025llada},
whereas continuous DLMs operate on continuous token representations
and decode the resulting representations into text
\citep{li2022diffusion,dieleman2022continuous,
gulrajani2023likelihood,hu2026elf}.
More closely related work studies training noise schedules
and inference timestep placement.
LangFlow learns a Gumbel noise schedule motivated by
information-uniform denoising \citep{chen2026langflow},
RePlaid learns its noise schedule by reducing ELBO-estimator
variance \citep{yang2026replaid}, and ELF demonstrates
the benefits of logit-normal over uniform inference times
\citep{hu2026elf}.
Whereas these studies learn training schedules or compare predefined inference rules, we study inference timestep placement as the sole variable. We optimize it directly, with all other settings fixed, and our method has no hard dependency on any predefined schedules.

\paragraph{Optimizing timestep placement.}
Prior work has directly optimized inference timestep placement
in visual diffusion.
AYS derives its objective from a bound on sampling-distribution
divergence \citep{sabour2024align}, while GITS uses dynamic
programming to select timesteps based on trajectory structure
\citep{chen2024trajectory}.
LD3 learns timesteps by matching the outputs of a low-NFE
sampler to a high-NFE reference \citep{tong2025ld3}.
S4S-Alt also uses reference matching but jointly optimizes
timesteps and solver coefficients \citep{frankel2025s4s}.
OYS instead uses Bayesian optimization to search directly
against image-quality metrics \citep{zhang2026oys}.
In contrast, we study timestep placement in continuous DLMs.
We investigate the language-specific relationship between
changes in terminal representations and the resulting changes
in decoded token distributions.
Building on this analysis, we develop a joint objective
and a calibration algorithm to improve language generation.
We provide an extended discussion of additional related work
in Appendix~\ref{app:related-work}.

%% file: sections/timestep_placement.tex
\section{Understanding Timestep Placement}
\label{sec:theory}

In this section, we seek to understand the role of timestep
placement in continuous DLM generation.
We empirically examine how it changes terminal latent
representations and how these changes propagate through
the decoder to affect token distributions. We further test whether the effects of a calibrated timestep
placement persist across held-out inputs.

\paragraph{Preliminaries.}
Continuous DLMs use an encoder to represent text in a continuous
latent space and a decoder to map latent representations back
to token probabilities \citep{hu2026elf,liang2026aurora}.
Let $t$ denote progress along the sampling process during inference.
Sampling starts from Gaussian noise at $t=0$ and progressively
transforms it into a latent representation at $t=1$ through finite denoising steps.
At step $k$, let $z_k$ and $z_{k+1}$ denote the latent states
before and after an update from time $t_k$ to $t_{k+1}$.
Let $v_k$ denote the velocity computed from the model's prediction
at the current state and time.
A typical Euler update takes the form
\begin{equation}
  z_{k+1}=z_k+(t_{k+1}-t_k)v_k.
  \label{eq:sampling_update}
\end{equation}

Each Euler update treats the predicted velocity as constant
over the time interval.
Multiplying this velocity by the interval length approximates
the change in the latent state.
Sampling thus approximates the continuous denoising process
through discrete updates over a finite sequence of time intervals.
Timestep placement determines how these intervals are distributed
along the sampling process.
We denote the terminal latent representation by $Z$.
The decoder converts $Z$ into token probabilities from which
the output text is generated. Appendix~\ref{app:theory} provides a theoretical account of
the full sampling and decoding process, while the following
subsections present empirical and illustrative analyses.

\subsection{Timestep Placement Shapes Terminal Representations}
\label{sec:terminal-displacement}

Intuitively, different timestep placements can lead to
different terminal representations even at the same NFE.
But how large can this effect become in a continuous DLM?

\setlength{\columnsep}{10pt}%
\begin{wrapfigure}{R}{0.4\textwidth}

  \centering
  \includegraphics[width=\linewidth]{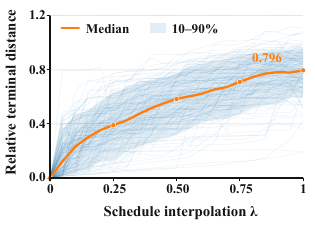}

  \caption{
  \textbf{Timestep placement changes terminal representations.}
  Relative displacement from uniform sampling.
  }

  \label{fig:schedule-response}
\end{wrapfigure}%

We test this effect using ELF-B on 128 held-out XSum inputs
at NFE--16.
We gradually shift the timesteps from a uniform placement
($\lambda=0$) to a fixed nonuniform placement ($\lambda=1$),
keeping each input's initial noise unchanged.
Let $Z_\lambda$ denote the resulting terminal representation
and $Z_U$ the uniform baseline's terminal representation.
We measure their relative terminal distance as
$\|Z_\lambda-Z_U\|_2/\|Z_U\|_2$ over the generated positions.

Figure~\ref{fig:schedule-response} shows that, at $\lambda=1$,
the median terminal displacement reaches 79.6\% of the
uniform endpoint's norm.
Even at $\lambda=0.1$, moving the timesteps only one tenth
of the way toward the nonuniform placement already produces
a median displacement of 23.3\%.
\textbf{Timestep placement alone can substantially change
the final latent representations at a fixed NFE budget.}

\subsection{From Latent Error to Token Divergence}
\label{sec:decoder-sensitivity}

\setlength{\columnsep}{10pt}%
\begin{wrapfigure}{R}{0.4\textwidth}

  \centering
  \includegraphics[width=\linewidth]{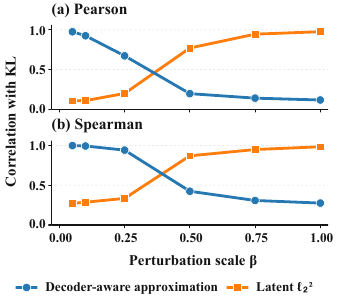}

  \caption{\textbf{Correlations with KL across six perturbation scales.}}
  \label{fig:mechanism-decoder}
\end{wrapfigure}%

\input{figures/calibration_generalization_float}
\input{figures/steptune_pipeline_float}

We next examine how terminal latent shifts affect decoded
token distributions.
Since finer sampling steps more accurately approximate
the continuous generation process, we use a high-NFE run
as a reference under the same model, solver, input,
and initial noise.
Let $Z^\star$ denote the reference endpoint, $Z$ a sampled
endpoint, and $q(Z)$ the decoded token probabilities.
The latent error is $e=Z-Z^\star$.
A matrix $W_{\mathrm{dec}}$ describes the decoder's local
sensitivity to different error directions, giving the
\emph{decoder-aware quadratic approximation}
\begin{equation}
  \mathrm{KL}\!\left(q(Z^\star)\,\|\,q(Z^\star+e)\right)
  \approx \frac{1}{2}e^\top W_{\mathrm{dec}}e.
  \label{eq:decoder-sensitivity}
\end{equation}
This expression depends on both error magnitude and direction.
The derivation is provided in Appendix~\ref{app:theory}.

We test this relationship using 96 terminal-error directions
from NFE--16 schedules on XSum, with an NFE--64 reference.
Along each direction, we evaluate the decoder at
$Z^\star+\beta e$ for six scales from $\beta=0.05$ to $1$.
At each scale, we compare how strongly normalized squared
latent distance and the quadratic approximation correlate
with the actual KL divergence.

Figure~\ref{fig:mechanism-decoder} reveals two distinct regimes.
At the largest scale, latent distance is strongly positively
correlated with KL: smaller latent errors tend to accompany
smaller deviations in token distributions.
Near the reference, however, this relationship weakens sharply.
At $\beta=0.05$ and $0.1$, the Pearson correlation between
latent distance and KL drops to about $0.11$, whereas the
decoder-aware approximation achieves $0.975$ and $0.925$.
Spearman correlations show the same pattern.

The quadratic approximation helps explain why latent distance
alone is insufficient near the reference: latent distance
measures error magnitude but ignores direction, whereas
the approximation accounts for the decoder's sensitivity
to different error directions.
Its weaker correlation with KL at larger scales is expected,
since it describes local behavior around the reference.

\textbf{Latent distance is strongly correlated with KL
at larger deviations, but near the reference, latent
distance alone cannot adequately capture deviations
in decoded token distributions.}

\subsection{A Joint Objective for Timestep Calibration}
\label{sec:stable-reduction}

The preceding analysis examines how timestep placement affects
terminal representations and decoded token distributions.
Our goal, however, is to find timestep placements that improve
generation quality.
Finding such placements in a continuous space is a search
problem, which our algorithm in Section~\ref{sec:method} addresses.
To guide this search under a limited calibration budget,
we need an objective that provides useful guidance and
whose reductions persist beyond the calibration trajectories.

Our primary interest is agreement with the reference token
distributions, measured by decoder KL.
Our preceding analysis shows that, at larger deviations,
smaller latent errors tend to correspond to smaller KL.
Near the reference, however, latent distance alone cannot
adequately capture differences in token probabilities
because of the decoder's sensitivity to error direction.

We therefore formulate a joint objective that accounts
for both latent error and decoder KL.
Latent error provides a direct signal of endpoint deviation
before it is transformed by the decoder, offering auxiliary
guidance toward the reference, particularly at larger deviations.
Decoder KL captures the actual differences in token
probabilities, including those that latent distance alone
misses near the reference. Section~\ref{sec:objective-ablation} compares this objective with
using latent error or decoder KL alone.

For a schedule $\theta$, let $A(\theta)$ and $K(\theta)$ denote
the mean relative squared latent error and mean decoder KL,
respectively, over the evaluation trajectories.
Using positive baseline normalization constants
$A_{\mathrm{scale}}$ and $K_{\mathrm{scale}}$
and a weight $\alpha\in[0,1]$, we define
\begin{equation}
  F_\alpha(\theta)
  =
  \alpha\frac{A(\theta)}{A_{\mathrm{scale}}}
  +
  (1-\alpha)\frac{K(\theta)}{K_{\mathrm{scale}}}.
  \label{eq:joint-discrepancy}
\end{equation}
We set $\alpha=1/2$; normalization details are provided
in Appendix~\ref{app:experimental-details}.

We test whether calibration with this objective substantially
reduces both latent error and, more importantly, decoder KL
on unseen trajectories across model scales and NFE budgets.
We calibrate one schedule for NFE--16 and another for NFE--32
using 16 trajectories from ELF-M.
Both schedules are then frozen and evaluated on ELF-B,
ELF-M, and ELF-L, using 160 independent validation trajectories
for each model and budget.

Figure~\ref{fig:schedule-control} shows that the joint
discrepancy decreases by approximately 68--80\% relative
to LogitNormal across the six validation settings.
Crucially, both components decrease in every setting:
latent error falls by 59.8--81.5\%, and decoder KL
by 71.6--78.2\%.
The improvement thus reflects reductions in both forms
of deviation, rather than a decrease in one component
offsetting an increase in the other.

\textbf{Calibrating timestep placement with the joint
objective substantially reduces both latent error and
decoder KL, with these benefits persisting on unseen
trajectories and across model scales.}

%% file: figures/calibration_generalization_float.tex
\begin{figure}[t]
  \centering
  \includegraphics[
  width=\textwidth,
  height=0.40\textheight,
  keepaspectratio,
  trim=0 1cm 0 0,
  clip
  ]{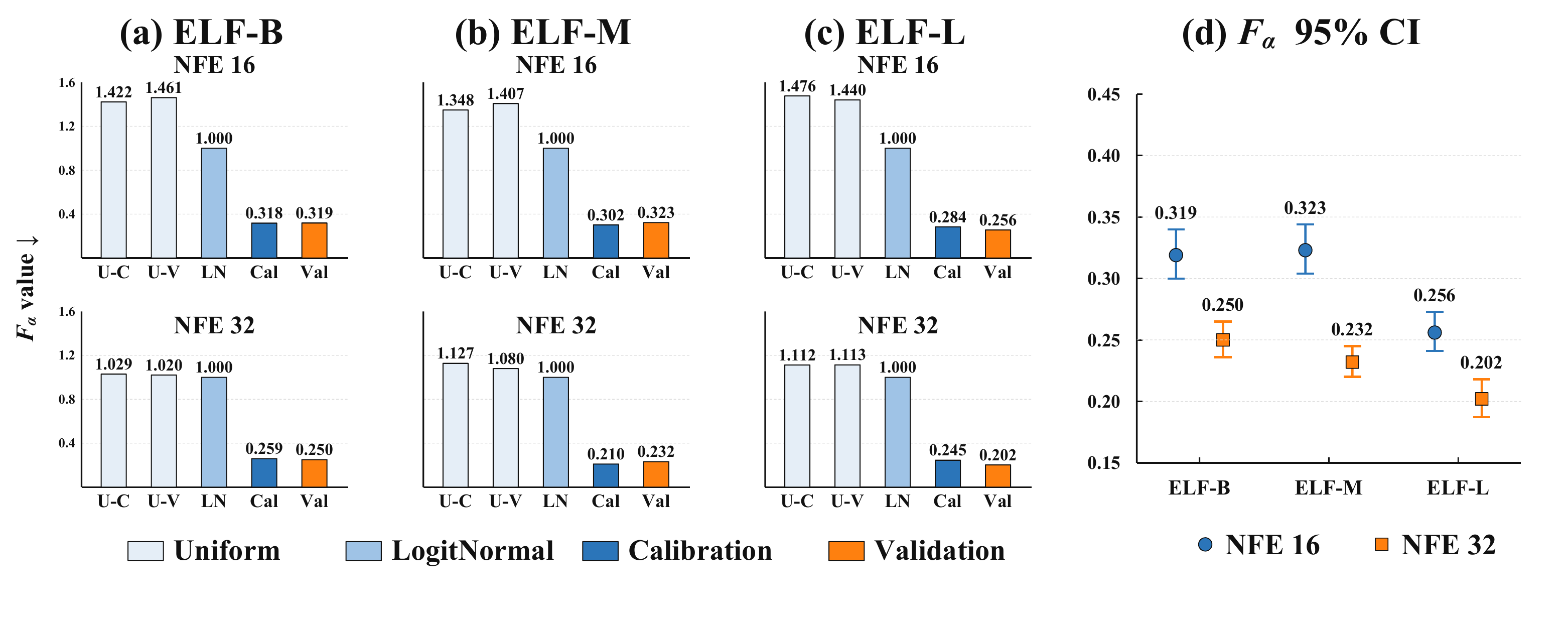}

  \caption{\textbf{Joint discrepancy reductions persist beyond calibration.}
  (a--c) Frozen schedules reduce normalized $F_{\alpha}$
  on calibration and unseen validation trajectories
  (LogitNormal $=1$).
  (d) Validation scores with 95\% confidence intervals.}
  \label{fig:schedule-control}
\end{figure}

%% file: figures/steptune_pipeline_float.tex
\begin{figure}[t]
  \centering
  \includegraphics[width=\linewidth]{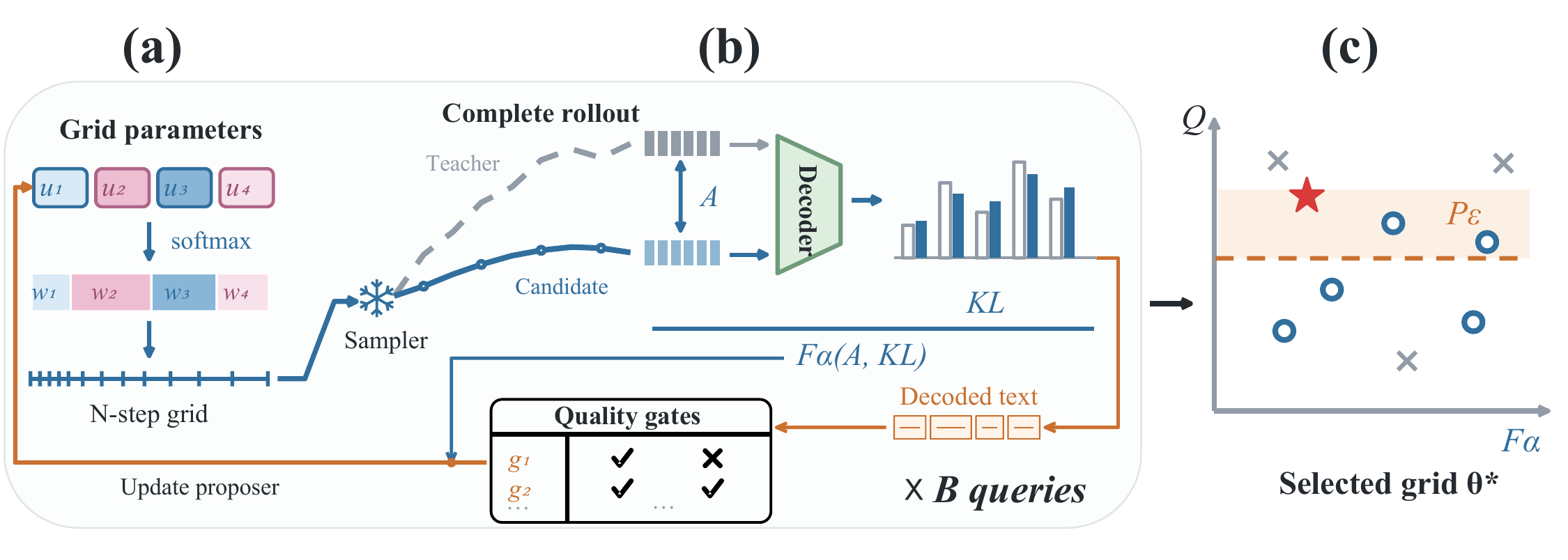}
  \caption{
  \textbf{Overview of \method{}.}
  Compress the schedule space, search using $F_\alpha$
  under quality constraints, and select and freeze
  one schedule for reuse.
  }
  \label{fig:pipeline}
\end{figure}

%% file: sections/method.tex
\section{StepTune: Tuning Timesteps for Continuous DLMs}
\label{sec:method}

In this section, we introduce \textbf{StepTune}, an algorithm
that searches for improved timestep placements under a limited
calibration budget, guided by the objective $F_\alpha$
in Equation~(\ref{eq:joint-discrepancy}). Each timestep placement defines a sampling schedule,
an ordered sequence of time points used during generation.
The algorithm proceeds in three stages (Figure~\ref{fig:pipeline}):
(I) \emph{Compress} the schedule space,
(II) \emph{Search} using $F_\alpha$ under quality constraints,
and (III) \emph{Select and freeze} one schedule for reuse across inputs.

\paragraph{Stage I: Compress the schedule space.}
Let $N$ denote the fixed NFE budget.
Directly optimizing all $N-1$ interior timesteps creates
a high-dimensional search problem.
We therefore compress the schedule space to make the search
tractable under a limited calibration budget.

We divide the sampling interval into $R$ regions with boundaries
$0=b_0<b_1<\cdots<b_R=1$.
Each region $r$ receives $n_r\geq1$ uniformly spaced steps,
with $\sum_{r=1}^{R}n_r=N$.
We search only the interior boundaries and step allocations;
these determine all timesteps while keeping the endpoints
and total NFE fixed.
Section~\ref{sec:query-efficiency} evaluates the query efficiency
of this compressed search space.

Alternatively, we can restrict the search to a finite bank
of schedules generated from chosen placement rules.
Both approaches reduce the search space without changing
the inference budget.

\paragraph{Stage II: Search under quality constraints.}
We use a calibration set of $M$ instances, each consisting
of an input and an initial noise realization.
For each instance, we cache a high-NFE reference generated
with the same input and initial noise.
Each candidate schedule is evaluated on the entire set and scored using \(F_\alpha\), with the corresponding language quality metrics also recorded.

Quality constraints limit degradation in task performance
and guard against undesirable generation behavior,
such as excessive repetition or abnormal output length.
For each requirement $j$, we define a margin $g_j(\theta)$
for schedule $\theta$, with $g_j(\theta)\geq 0$
indicating satisfaction.
The metrics and thresholds used in each experiment
are detailed in Appendix~\ref{app:experimental-details}.

Let $B$ denote the budget of schedule evaluations.
Each evaluation runs one schedule over the entire calibration set.
For the parameterized search, constrained Bayesian optimization
proposes schedules that balance predicted reductions
in $F_\alpha$ with exploration.
Predicted constraint violations penalize or exclude proposals,
discouraging expenditure on schedules likely to violate
the quality requirements.
A finite candidate bank is evaluated directly.
In both cases, complete sampling runs provide the measured
discrepancy and quality margins used for final selection.

\paragraph{Stage III: Select and freeze.}
After search, let $\mathcal H_{\mathrm{feas}}$ contain
the evaluated schedules that satisfy all measured
quality constraints.
If this set is empty, we retain the baseline schedule.
Otherwise, let $Q(\theta)$ denote the observed calibration
quality, oriented so that larger values are better.
We first retain schedules with near-best quality and then
select the one with the lowest $F_\alpha$.

For a scalar quality criterion, let $\varepsilon_Q\geq 0$
be a tolerance specified before search.
The best observed feasible quality is
$Q_{\max}=\max_{\theta\in\mathcal H_{\mathrm{feas}}}Q(\theta)$.
We define the retained set $\mathcal P_\varepsilon$
and selected schedule $\theta^\star$ as
\begin{equation}
  \begin{aligned}
    \mathcal P_\varepsilon
    &=
    \left\{
    \theta\in\mathcal H_{\mathrm{feas}}:
    Q(\theta)\geq Q_{\max}-\varepsilon_Q
    \right\}, \\
    \theta^\star
    &=
    \arg\min_{\theta\in\mathcal P_\varepsilon}
    F_\alpha(\theta).
  \end{aligned}
  \label{eq:schedule-selection}
\end{equation}
For multiple quality metrics, the retained set uses
the metric-wise or joint tolerance tests described in
Appendix~\ref{app:experimental-details}.
Quality therefore determines the admissible set,
and $F_\alpha$ determines the final choice within it. The design of tolerance helps limit overfitting to calibration scores.

We freeze the selected timestep placement before evaluation
on untouched test data and reuse it across inputs.
Deployment changes only timestep placement, requiring
no model updates, per-example search, backward passes,
or additional model evaluations.

%% file: sections/experiments.tex
\section{Experiments}
\label{sec:experiments}

\paragraph{Experimental setup.}
We evaluate StepTune with ELF \citep{hu2026elf} and
AURORA-LM \citep{liang2026aurora} on unconditional and
conditional generation.
For ELF, unconditional experiments use ELF-B at NFE--8/16/32
and report perplexity and unigram entropy.
Conditional experiments use task-specific checkpoints on
XSum and WMT14 De$\rightarrow$En at NFE--16/32/64,
reporting ROUGE-1/2/L and BLEU/chrF.
For AURORA-LM-S, we evaluate unconditional generation and
XSum summarization at NFE--16 using the same metrics,
additionally reporting MAUVE for unconditional generation.

We compare against ELF's native LogitNormal schedule, AURORA-LM's native tan-d schedule, and an adaptation of LD3 \citep{tong2025ld3} for ELF conditional generation.
Ablation and query-efficiency studies use the ELF
WMT14 De$\rightarrow$En checkpoint.
At matched NFE, only timestep placement changes;
the model and other sampling settings remain fixed.
Schedules are calibrated on small, disjoint sets and frozen
before held-out evaluation.
Complete calibration, reference, sampling, evaluation,
and efficiency protocols are provided in
Appendix~\ref{app:experimental-details}.

\subsection{Main Results}
\label{sec:main-generation}

\begin{table}[t]
  \centering
  \small
  \setlength{\tabcolsep}{3pt}
  \renewcommand{\arraystretch}{1.08}

  \caption{
  \textbf{Conditional generation quality and efficiency on ELF-B.}
  Results use the full XSum and WMT14 test sets (11,334 and 3,003 examples).
  Lat.\ denotes latency in ms at batch size 1, and Thr.\ denotes throughput in
  sequences/s at batch size 64; both are measured on an NVIDIA B200.
  All baselines are reproduced.
  LD3 refers to our adaptation of the original method
  (Appendix~\ref{app:exp-conditional}); LD3 and ours use a 64-NFE teacher for
  NFE--16/32 and a 128-NFE teacher for NFE--64.
  }
  \label{tab:conditional-generation}

  \begin{tabular*}{\textwidth}{
    @{\extracolsep{\fill}}cl*{9}{r}@{}
    }
    \toprule
    NFE & Method
    & \multicolumn{5}{c}{XSum}
    & \multicolumn{4}{c}{WMT14 De$\rightarrow$En} \\
    \cmidrule(lr){3-7}
    \cmidrule(lr){8-11}

    & & R-1 $\uparrow$
    & R-2 $\uparrow$
    & R-L $\uparrow$
    & Lat. $\downarrow$
    & Thr. $\uparrow$
    & BLEU $\uparrow$
    & chrF $\uparrow$
    & Lat. $\downarrow$
    & Thr. $\uparrow$ \\
    \midrule

    \multirow{3}{*}{16}
    & Native LogitNormal
    & 34.636 & 11.010 & 26.351 & 142.11 & 25.83
    & 23.0295 & 51.0300 & 141.09 & 190.97 \\

    & LD3 adaptation
    & 35.112 & 11.600 & 26.940 & -- & --
    & 24.7199 & 52.2235 & -- & -- \\

    & \textbf{Ours}
    & \textbf{36.304} & \textbf{12.589} & \textbf{28.015}
    & 142.20 & 25.83
    & \textbf{26.8390} & \textbf{53.6881}
    & 141.07 & 191.00 \\
    \midrule

    \multirow{3}{*}{32}
    & Native LogitNormal
    & 35.340 & 11.611 & 26.996 & 275.67 & 13.51
    & 24.6668 & 52.1351 & 274.66 & 100.96 \\

    & LD3 adaptation
    & 35.326 & 11.788 & 27.131 & -- & --
    & 24.3254 & 51.8025 & -- & -- \\

    & \textbf{Ours}
    & \textbf{36.577} & \textbf{12.947} & \textbf{28.263}
    & 275.75 & 13.51
    & \textbf{27.0917} & \textbf{53.7921}
    & 274.46 & 101.00 \\
    \midrule

    \multirow{3}{*}{64}
    & Native LogitNormal
    & 35.726 & 11.878 & 27.276 & 543.15 & 6.91
    & 25.5353 & 52.7495 & 541.11 & 51.96 \\

    & LD3 adaptation
    & 35.810 & 12.111 & 27.547 & -- & --
    & 25.9150 & 53.0351 & -- & -- \\

    & \textbf{Ours}
    & \textbf{36.607} & \textbf{12.988} & \textbf{28.277}
    & 543.19 & 6.91
    & \textbf{27.1268} & \textbf{53.8529}
    & 541.03 &51.99 \\

    \bottomrule
  \end{tabular*}
\end{table}

\begin{table*}[t]
  \centering
  \small

  \caption{
  \textbf{Generation results on ELF-B and AURORA-LM-S.}
  (a) Unconditional generation on ELF-B,
  averaged over six runs.
  (b) XSum summarization and OpenWebText unconditional
  generation on AURORA-LM-S, averaged over five runs.
  All baselines are reproduced.
  }
  \label{tab:unconditional-generation}

  \begingroup
  \setlength{\tabcolsep}{3pt}
  \renewcommand{\arraystretch}{1.05}

  \sbox{\stepLeftTable}{%
  \begin{tabular*}{0.43\textwidth}{
    @{\extracolsep{\fill}}lccc@{}
    }
    \toprule
    NFE & 8 & 16 & 32 \\
    \midrule

    \multicolumn{4}{@{}l}{
    \emph{PPL} $\downarrow$
    } \\

    LogitNormal
    & $69.169$ & $32.563$ & $24.021$ \\

    \textbf{Ours}
    & $\mathbf{41.260}$
    & $\mathbf{26.059}$
    & $\mathbf{20.702}$ \\

    \midrule

    \multicolumn{4}{@{}l}{
    \emph{Entropy}
    } \\

    LogitNormal
    & $5.229$ & $5.173$ & $5.157$ \\

    \textbf{Ours}
    & $5.233$ & $5.112$ & $5.111$ \\

    \bottomrule
  \end{tabular*}%
  }

  \sbox{\stepRightTable}{%
  \begin{tabular*}{0.55\textwidth}{
    @{}l@{\hspace{-14pt}\extracolsep{\fill}}cccc@{}
    }
    \toprule
    Method & NFE & \multicolumn{3}{c@{}}{Metrics} \\
    \midrule

    \emph{XSum} &
    & R-1 $\uparrow$
    & R-2 $\uparrow$
    & R-L $\uparrow$ \\

    Native tan-$d$ & 32
    & $36.580$ & $13.479$ & $28.873$ \\

    \textbf{Ours} & 16
    & $\mathbf{36.752}$
    & $\mathbf{13.635}$
    & $\mathbf{29.081}$ \\

    \midrule

    \emph{OpenWebText} &
    & PPL $\downarrow$
    & Entropy
    & MAUVE $\uparrow$ \\

    Native tan-$d$ & 16
    & $56.394$ & $5.386$ & $0.9132$ \\

    \textbf{Ours} & 16
    & $\mathbf{41.210}$
    & $5.356$
    & $\mathbf{0.9326}$ \\

    \bottomrule
  \end{tabular*}%
  }

  \noindent\hbox to \textwidth{%
  \usebox{\stepLeftTable}%
  \hfil
  \vrule width 0.35pt
  height \ht\stepLeftTable
  depth  \dp\stepLeftTable
  \kern 1.5pt
  \vrule width 0.35pt
  height \ht\stepLeftTable
  depth  \dp\stepLeftTable
  \hfil
  \usebox{\stepRightTable}%
  }

  \vspace{3pt}

  \noindent\hbox to \textwidth{%
  \makebox[0.43\textwidth][c]{\textbf{(a) ELF-B}}%
  \hfil
  \makebox[0.55\textwidth][c]{\textbf{(b) AURORA-LM-S}}%
  }

  \endgroup
\end{table*}

\paragraph{Performance on conditional generation.}
StepTune consistently improves conditional generation quality
across tasks and sampling budgets.
On ELF, it outperforms both native LogitNormal and the LD3
adaptation on every reported metric at NFE--16/32/64
(Table~\ref{tab:conditional-generation}).
At NFE--16, ROUGE-2 on XSum increases from 11.010 to 12.589,
while BLEU on WMT14 increases from 23.030 to 26.839
relative to native sampling.
By contrast, the LD3 adaptation improves XSum scores but
falls below native sampling on both WMT14 metrics at NFE--32,
despite using the same teacher budget.

More importantly, our NFE--16 schedules surpass native
NFE--64 on all three ROUGE metrics and both BLEU and chrF.
This quality advantage translates into measured efficiency:
on XSum and WMT14, respectively, generation is
$3.82\times$ and $3.84\times$ faster at batch size~1,
with $3.74\times$ and $3.68\times$ the throughput
at batch size~64.
At matched NFE--16 and 32, latency and throughput remain
essentially unchanged.
Thus, timestep placement improves the quality--efficiency
trade-off by enabling better generation with only one quarter
of the native baseline's evaluations on both tasks.

The benefit also extends to AURORA-LM-S
(Table~\ref{tab:unconditional-generation}b).
On XSum, StepTune at NFE--16 exceeds native tan-$d$
sampling at NFE--32 on all three ROUGE metrics,
achieving 36.752/13.635/29.081 versus
36.580/13.479/28.873.
Improved placement therefore yields higher quality with
half the evaluations in a second continuous DLM architecture,
despite differences in the latent representation, decoder,
and native timestep schedule.

\paragraph{Performance on unconditional generation.}
The improvements also hold for unconditional generation.
On ELF-B, StepTune reduces perplexity relative to native
LogitNormal by 40.3\%, 20.0\%, and 13.8\% at NFE~8, 16,
and 32, respectively
(Table~\ref{tab:unconditional-generation}a).
The largest gain occurs at the smallest budget, where
perplexity falls from 69.169 to 41.260, highlighting
the importance of timestep placement under tight budgets.
Across these budgets, unigram entropy remains within 1.2\%
of the native values, showing that the perplexity reductions
are accompanied by little change in this diversity measure.

AURORA-LM-S exhibits the same benefit at NFE--16
(Table~\ref{tab:unconditional-generation}b).
Perplexity decreases from 56.394 to 41.210, a 26.9\%
reduction, while MAUVE improves from 0.9132 to 0.9326
and entropy remains close to the native value
(5.356 versus 5.386).
The improvement thus extends beyond perplexity to
distributional similarity with human text. Together, these results show that timestep calibration
improves generation quality across architectures and task types.

\subsection{Calibration-query efficiency}
\label{sec:query-efficiency}

We examine how search-space compression affects search
efficiency using the ELF WMT14 De$\rightarrow$En checkpoint
at NFE--16.
We compare three search spaces under the same Bayesian
optimization settings and a budget of 240 queries.
After each query, we select the schedule with the lowest
$F_\alpha$ on the calibration set so far and evaluate it
on the validation set, plotting its measured validation
$F_\alpha$.

Region3 and Mixed6 both divide the sampling interval into
four regions.
Region3 fixes the step allocation to $(4,4,4,4)$ and searches
only the boundaries, whereas Mixed6 also varies the allocations,
giving three and six degrees of freedom, respectively
(Section~\ref{sec:method}).
Full15 searches all 15 interior timesteps without compression.
Each search begins with one uniform schedule and 15 Sobol
points, followed by Bayesian optimization.
The shaded region in Figure~\ref{fig:calibration-query-efficiency}
marks this initialization phase.

\begingroup
\setlength{\columnsep}{6pt}
\setlength{\intextsep}{3pt}

\begin{wrapfigure}{R}{0.50\textwidth}
  \vspace{-0.5\baselineskip}
  \centering

  \includegraphics[
  width=\linewidth,
  keepaspectratio
  ]{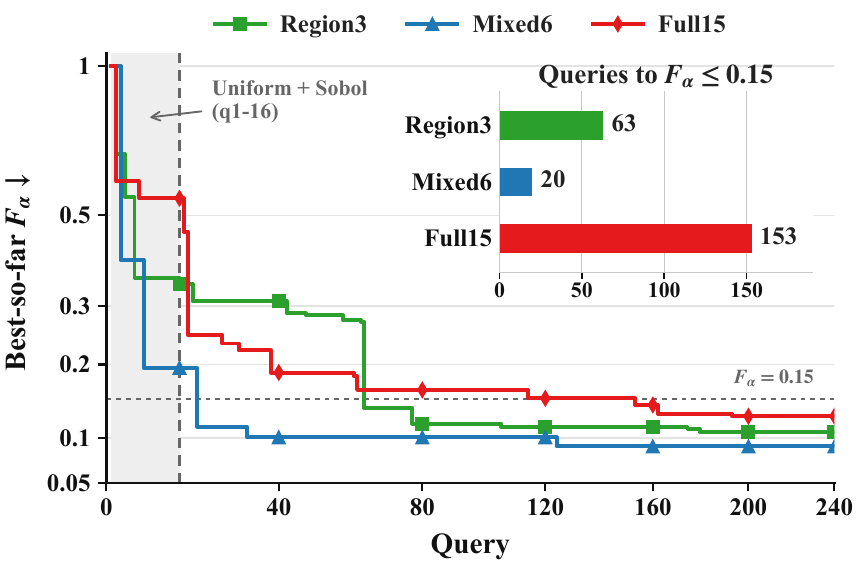}

  \captionsetup{skip=3pt}
  \caption{
  \textbf{Calibration-query efficiency.}
  Validation $F_\alpha$ of the best calibration schedule so far. Both axes use logarithmic compression.
  }
  \label{fig:calibration-query-efficiency}
\end{wrapfigure}

Mixed6 reaches validation $F_\alpha\leq0.15$ after only
20 queries, compared with 63 for Region3 and 153 for Full15,
and achieves the lowest final value within the 240-query budget.
Region3 also outperforms Full15, showing that boundary
compression alone improves search efficiency.
Mixed6's further gains highlight the value of adapting
step allocations alongside boundaries. Together, these results show that greater search-space
flexibility does not necessarily yield better schedules
under a limited query budget.

We also report the actual search time used to obtain
the ELF NFE--16 schedules evaluated in
Table~\ref{tab:conditional-generation}. These schedules were calibrated with 160 queries, taking 0.57 GPU-minutes for WMT14 and
6.79 GPU-minutes for XSum on a single NVIDIA B200. This one-time cost is amortized over subsequent generation
by reusing the frozen schedules.
Timing details are provided in
Appendix~\ref{app:experimental-details}.

\par
\endgroup

\subsection{Ablation Study}
\label{sec:objective-ablation}

\begin{wraptable}{r}{0.47\textwidth}
  \centering
  \captionsetup{
  skip=4pt,
  justification=raggedright,
  singlelinecheck=false
  }
  \caption{\textbf{Objective ablation.} WMT14, NFE--16;
  mean $\pm$ standard deviation across ten calibration panels.}
  \label{tab:objective-ablation}
  \small
  \setlength{\tabcolsep}{3pt}
  \renewcommand{\arraystretch}{1.10}
  \begin{tabular*}{\linewidth}{@{\extracolsep{\fill}}lcc@{}}
    \toprule
    Method & BLEU $\uparrow$ & chrF $\uparrow$ \\
    \midrule
    $A$-only
    & $24.361 \pm 0.153$ & $52.653 \pm 0.108$ \\
    KL-only
    & $24.357 \pm 0.209$ & $52.648 \pm 0.142$ \\
    Pure-$F$
    & $24.515 \pm 0.121$ & $52.771 \pm 0.067$ \\
    Pure-$Q$
    & $23.814 \pm 1.975$ & $52.227 \pm 1.543$ \\
    \addlinespace[2pt]
    \textbf{Full}
    & $\mathbf{24.721} \pm 0.383$
    & $\mathbf{52.918} \pm 0.247$ \\
    \bottomrule
  \end{tabular*}
\end{wraptable}

We compare calibration objectives on WMT14 at NFE--16
using Region3 from Section~\ref{sec:query-efficiency}
and ten independent calibration sets of 16 examples each.
Table~\ref{tab:objective-ablation} reports test-set quality
for the resulting frozen schedules, including variation
across sets.

A-only and KL-only both perform well
(see Appendix~\ref{app:exp-conditional} for detailed definitions),
with A-only achieving slightly higher mean quality
and lower variation.
This is consistent with latent error providing direct
guidance at larger deviations, where it strongly correlates
with KL.
Pure-$F$ improves both mean quality and stability,
supporting the design rationale for the joint objective
in Section~\ref{sec:stable-reduction}.

Pure-$Q$ yields lower mean quality and much larger variation.
Discrete changes in decoded text make task scores an
irregular search signal, while even high calibration scores
can reflect overfitting to the small calibration set
when $F_\alpha$ is ignored.
Full combines $F_\alpha$-guided search with measured quality
and achieves the highest mean BLEU and chrF.
Its variation is higher than Pure-$F$ but substantially
lower than Pure-$Q$, trading some stability for better
generation quality---the ultimate goal of calibration.

%% file: sections/conclusion.tex
\section{Conclusion}

The practical future of continuous DLMs depends on realizing
the efficiency of parallel generation with few sequential
denoising steps.
Our work explores an opportunity shared by these models:
timestep placement remains adjustable after training,
offering a widely applicable, low-cost route to improving
generation without updating model weights.
Our empirical study demonstrates substantial and persistent
effects on terminal representations and decoded token
distributions, and identifies language-specific properties
of the relationship between them.

Building on these findings, StepTune uses a language-aware
objective and search-space compression to calibrate reusable
schedules under generation-quality constraints.
Experiments across architectures and tasks demonstrate
consistent quality improvements that enable lower latency
and higher throughput.
The resulting schedules are reused across inputs,
amortizing the one-time calibration cost over subsequent
generation.
More broadly, this work highlights timestep placement
as a practical source of improvement in existing models,
showing how better use of a limited sampling budget can
help realize the efficiency gains promised by parallel
language generation.

%% file: sections/ai_use_statement.tex
\subsection*{AI Use Statement}
In the preparation of this work, generative AI tools were employed solely as auxiliary aids. Specifically, they assisted with implementing and debugging parts of the codebase by translating algorithmic procedures and design specifications conceived by the authors into working code, and with refining the manuscript's language, including improving fluency, correcting grammatical and typographical errors, and fixing formatting inconsistencies. All scientific contributions of this paper, including the research motivation, the proposed method and its formulation, the theoretical derivations, the experimental protocol, and the interpretation of results, originate entirely from the authors. The AI tools were not involved in conceiving the research direction, designing the methodology or experiments, composing any substantive portion of the manuscript, producing or altering experimental data, establishing theoretical claims, or drawing scientific conclusions. All AI-assisted code and text were carefully inspected, validated, and where necessary revised by the authors prior to inclusion. The authors bear full responsibility for the accuracy and integrity of the published work.

%% file: sections/ethics_statement.tex
\subsection*{Ethics Statement}

StepTune improves the quality--efficiency trade-off of continuous
diffusion language models by optimizing inference timestep placement
while keeping model parameters fixed. Our experiments use existing
model checkpoints and established benchmarks, including XSum,
WMT14, and OpenWebText, with automated evaluation metrics.
Reducing the number of denoising steps can lower inference costs
and make language generation more accessible, but may also lower
the cost of generating harmful or misleading content.
StepTune does not specifically address biases, factual inaccuracies,
or privacy risks associated with the underlying models and their
training data. Its calibration constraints assess generation quality
and do not constitute safety guarantees. Moreover, because timestep
placement changes output distributions, keeping model weights fixed
does not ensure that safety-related behavior remains unchanged.
Deployment should therefore include application-specific safety
evaluation, appropriate safeguards against misuse, and adherence
to the terms of use of the underlying models and datasets.

%% file: sections/reproducibility_statement.tex
\subsection*{Reproducibility Statement}
The experimental settings needed to reproduce all reported results---checkpoint
and sampler configurations, calibration and selection protocols, search budgets,
and evaluation procedures---are specified in
Appendix~\ref{app:experimental-setup}. Every deployed timestep schedule is
released in the accompanying reproducibility package as exact float32 grids,
together with the compact coordinates and the deterministic constructor that
regenerates them; these frozen operating points can be evaluated directly from the released checkpoints
without re-running calibration.
The reproducibility package is publicly available at
\url{https://github.com/collapsar-z/StepTune-reproducibility}.

%% file: sections/appendix.tex
\appendix
\section*{Appendix}
\section{Theoretical Analysis}
\label{app:theory}

This appendix makes precise how timestep placement affects the terminal latent state and its decoded
distribution.  The argument has four parts.  First, we write the self-conditioned sampler as a map on
an augmented state and derive an exact, pathwise decomposition of its terminal error.  Second, we
separate this exact identity from the reference-Jacobian approximation and bound the approximation
remainder.  Third, we connect terminal latent error to decoder KL divergence, both locally and at a
finite displacement.  Finally, we show which quantities must remain stable for one fixed schedule to
retain low teacher-relative error across inputs and random seeds.

Throughout, \(\|\cdot\|_2\) denotes the Euclidean norm and \(\|\cdot\|_{\mathrm{op}}\) its induced
operator norm.  All statements are for a fixed model, sampler, number of function evaluations, and
generation setting.  Random samplers are treated pathwise after their random driver has been fixed.

\subsection{Sampler, schedule, and numerical reference}
\label{app:theory-setup}

Let
\[
\theta=(t_0,\ldots,t_N),
\qquad 0=t_0<t_1<\cdots<t_N=1,
\qquad h_k=t_{k+1}-t_k,
\]
be an \(N\)-step schedule.  A self-conditioned sampler carries both a latent state \(z_k\) and a
memory state \(s_k\), such as the model's previous endpoint prediction.  We therefore use the
augmented state
\[
y_k=(z_k,s_k)\in\mathcal{Y}.
\]
For example, a deterministic Euler step can be written abstractly as
\[
p_k=g_x(z_k,s_k,t_k),
\qquad
z_{k+1}=z_k+h_k v_x(z_k,s_k,t_k),
\qquad
s_{k+1}=p_k,
\]
where \(g_x\) includes the conditioning and guidance operations and \(v_x\) includes the
parameterization-specific conversion from the model prediction to a velocity.  The analysis does not
depend on this particular parameterization.  It only requires the complete deployed update to define
a map
\begin{equation}
  y_{k+1}=\Phi^{\theta}_{x,k}(y_k;\omega),
  \label{eq:realized-step-map}
\end{equation}
where \(\omega\) collects any random variables used by the rollout.  Crucially,
\(\Phi^{\theta}_{x,k}\) includes both the latent update and the memory refresh.  Omitting the latter
would remove a genuine route by which an early error affects later model evaluations.

Let \(y_k^*\) denote a reference state at the same boundary \(t_k\).  In our application these states
come from a refined numerical rollout with matched conditioning, initialization, and sampler
randomness.  The reference need not be the exact solution of a continuous-time process.  Define the
boundary error and one-step defect by
\begin{equation}
  e_k:=y_k-y_k^*,
  \qquad
  d_k^{\theta}:=y_{k+1}^*-
  \Phi^{\theta}_{x,k}(y_k^*;\omega).
  \label{eq:error-and-defect}
\end{equation}
These quantities answer different questions.  The error \(e_k\) records the discrepancy already
present at boundary \(k\).  The defect \(d_k^{\theta}\) starts the coarse update from the reference
state and measures the new discrepancy created on interval \([t_k,t_{k+1}]\).  This distinction is
what permits interval-level attribution.

\subsection{Exact transport of interval defects}
\label{app:exact-transport}

We first state the result without a small-error approximation.

\begin{theorem}[Exact pathwise defect transport]
  \label{thm:exact-transport}\label{thm:transport}
  Fix \(x\), \(\theta\), and the realized driver \(\omega\).  Suppose that
  \(\Phi^{\theta}_{x,k}\) is continuously differentiable on a neighborhood of the line segment joining
  \(y_k^*\) and \(y_k\), for every \(k\).  Define the secant operator
  \begin{equation}
    \bar A_k^{\theta}
    :=\int_0^1
    D_y\Phi^{\theta}_{x,k}(y_k^*+\tau e_k;\omega)\,d\tau
    \label{eq:secant-operator}
  \end{equation}
  and, for \(0\leq j<N\),
  \begin{equation}
    \bar P^{\theta}_{j:N}
    :=\bar A^{\theta}_{N-1}\bar A^{\theta}_{N-2}\cdots\bar A^{\theta}_{j},
    \qquad
    \bar P^{\theta}_{N:N}:=I.
    \label{eq:secant-product}
  \end{equation}
  Then
  \begin{equation}
    e_N
    =\bar P^{\theta}_{0:N}e_0
    -\sum_{k=0}^{N-1}\bar P^{\theta}_{k+1:N}d_k^{\theta}.
    \label{eq:exact-terminal-general}
  \end{equation}
  If the candidate and reference share their augmented initialization, then \(e_0=0\) and
  \begin{equation}
    e_N=-\sum_{k=0}^{N-1}\bar P^{\theta}_{k+1:N}d_k^{\theta}.
    \label{eq:exact-terminal}
  \end{equation}
\end{theorem}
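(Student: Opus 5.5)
The plan is to derive a one-step linear recursion for the boundary error and then unroll it. Fix \(k\). By definition, \(e_{k+1}=y_{k+1}-y_{k+1}^*=\Phi^{\theta}_{x,k}(y_k;\omega)-y_{k+1}^*\). Adding and subtracting \(\Phi^{\theta}_{x,k}(y_k^*;\omega)\), and using the definition of the defect in \eqref{eq:error-and-defect}, gives
\[
e_{k+1}=\bigl[\Phi^{\theta}_{x,k}(y_k^*+e_k;\omega)-\Phi^{\theta}_{x,k}(y_k^*;\omega)\bigr]-d_k^{\theta}.
\]

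The bracket is handled exactly by the fundamental theorem of calculus along the segment \(\tau\mapsto y_k^*+\tau e_k\), \(\tau\in[0,1]\). By hypothesis \(\Phi^{\theta}_{x,k}\) is \(C^1\) on a neighborhood of this segment, so the composite map \(\tau\mapsto\Phi^{\theta}_{x,k}(y_k^*+\tau e_k;\omega)\) is \(C^1\) on \([0,1]\). Its derivative is \(D_y\Phi^{\theta}_{x,k}(y_k^*+\tau e_k;\omega)\,e_k\). Integrating and pulling the constant vector \(e_k\) out of the (componentwise) integral yields \(\bar A_k^{\theta}e_k\), with \(\bar A_k^{\theta}\) as in \eqref{eq:secant-operator}. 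Hence
\[
e_{k+1}=\bar A_k^{\theta}e_k-d_k^{\theta}.
\]
This is an exact identity, with no small-error approximation. The only care needed is that \(\mathcal Y\) is treated as a finite-dimensional vector space, so that the mean value theorem holds in this integral form. If \(\Phi\) has piecewise structure (e.g.\ clipping), the \(C^1\)-on-the-segment hypothesis is exactly what excludes trouble.

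Next, induct on \(n\) to prove
\[
e_n=\bar P^{\theta}_{0:n}e_0-\sum_{k=0}^{n-1}\bar P^{\theta}_{k+1:n}d_k^{\theta},
\]
where the partial products \(\bar P_{j:n}=\bar A_{n-1}\cdots\bar A_j\) are defined analogously and \(\bar P_{n:n}=I\). The base case \(n=0\) is trivial, since the sum is empty. For the inductive step, substitute the hypothesis into the recursion. Then use \(\bar A_n\bar P_{j:n}=\bar P_{j:n+1}\) for the propagated terms and \(\bar P_{n+1:n+1}=I\) for the new defect \(-d_n^{\theta}\). Setting \(n=N\) gives \eqref{eq:exact-terminal-general}. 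When the initializations are shared, \(e_0=0\), so the first term vanishes and \eqref{eq:exact-terminal} follows.

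The main obstacle is minor: the secant operators \(\bar A_k^{\theta}\) depend on the actual errors \(e_k\), so the products are path-dependent rather than reference Jacobians. This does not affect exactness, because each \(\bar A_k^{\theta}\) is a well-defined linear operator once the realized rollout is fixed. I would note this explicitly, since it is what distinguishes this result from the linearized approximation treated afterwards.
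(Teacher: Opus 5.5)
Your proposal is correct and matches the paper's proof: subtract the reference state, insert the defect, and apply the fundamental theorem of calculus along the segment from $y_k^*$ to $y_k$ to get the exact recursion $e_{k+1}=\bar A_k^{\theta}e_k-d_k^{\theta}$. You then unroll it by explicit induction on partial products, where the paper says "repeated substitution", and setting $e_0=0$ gives the shared-initialization form exactly as the paper does.
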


\begin{proof}
  Subtract the reference state at the next boundary and insert the defect from
  Equation~\eqref{eq:error-and-defect}:
  \begin{align}
    e_{k+1}
    &=\Phi^{\theta}_{x,k}(y_k;\omega)-y_{k+1}^* \\
    &=\Phi^{\theta}_{x,k}(y_k;\omega)
    -\Phi^{\theta}_{x,k}(y_k^*;\omega)-d_k^{\theta}.
    \label{eq:one-step-before-ftc}
  \end{align}
  The fundamental theorem of calculus along the segment from \(y_k^*\) to \(y_k\) gives
  \[
  \Phi^{\theta}_{x,k}(y_k;\omega)
  -\Phi^{\theta}_{x,k}(y_k^*;\omega)
  =\bar A_k^{\theta}e_k.
  \]
  Hence
  \begin{equation}
    e_{k+1}=\bar A_k^{\theta}e_k-d_k^{\theta}.
    \label{eq:exact-one-step-recursion}
  \end{equation}
  Repeated substitution proves Equation~\eqref{eq:exact-terminal-general}; setting \(e_0=0\) yields
  Equation~\eqref{eq:exact-terminal}.
\end{proof}

The theorem separates two roles of the schedule.  The defect \(d_k^{\theta}\) measures where the
coarse grid creates error, while \(\bar P^{\theta}_{k+1:N}\) describes how all subsequent updates
carry that error to the endpoint.  An early defect passes through more later operators, but this does
not by itself imply amplification: those operators may amplify, attenuate, rotate, or cancel its
contribution.  The terminal discrepancy therefore depends on both the magnitudes and directions of
the transported defects.

Equation~\eqref{eq:exact-terminal} is exact at the deployed NFE.  It requires neither independent
defects, invertible steps, nor small intermediate errors.  Its pathwise form also allows the transport
operators and defects to be statistically dependent.  Timestep placement influences both objects,
because changing a boundary changes the one-step approximation as well as the states at which later
updates are evaluated.

\subsection{Reference linearization and its remainder}
\label{app:linearized-transport}

The secant operators are exact, but they depend on the complete candidate--reference displacement.
A simpler approximation replaces each secant operator with the Jacobian at the reference state.  The
next result makes the resulting error explicit.

\begin{proposition}[Reference-Jacobian approximation]
  \label{prop:linearized-transport}
  Assume the conditions of Theorem~\ref{thm:exact-transport}.  Suppose additionally that each realized
  step map is twice continuously differentiable on the relevant state segment and that
  \[
  \|D_y\Phi^{\theta}_{x,k}(y;\omega)\|_{\mathrm{op}}\leq L_k,
  \qquad
  \|D_y^2\Phi^{\theta}_{x,k}(y;\omega)\|_{\mathrm{op}}\leq B_k
  \]
  there.  Let
  \[
  A_k^{\theta}:=D_y\Phi^{\theta}_{x,k}(y_k^*;\omega),
  \qquad
  P^{\theta}_{j:N}:=A^{\theta}_{N-1}\cdots A^{\theta}_{j},
  \qquad
  P^{\theta}_{N:N}:=I.
  \]
  For matched initialization,
  \begin{equation}
    e_N
    =-\sum_{k=0}^{N-1}P^{\theta}_{k+1:N}d_k^{\theta}+R_N,
    \label{eq:linearized-terminal}
  \end{equation}
  where
  \begin{equation}
    \|R_N\|_2
    \leq \frac{1}{2}\sum_{k=0}^{N-1}
    \left(\prod_{j=k+1}^{N-1}L_j\right)
    B_k\|e_k\|_2^2.
    \label{eq:linearization-remainder-bound}
  \end{equation}
  An empty product in Equation~\eqref{eq:linearization-remainder-bound} equals one.
\end{proposition}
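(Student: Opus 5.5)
The plan is to re-run the proof of Theorem~\ref{thm:exact-transport}, but to split off the nonlinear part of each step before unrolling the recursion, so that only the reference Jacobians \(A_k^{\theta}\) appear in the transport products. We start from the exact one-step identity in Equation~\eqref{eq:one-step-before-ftc},
\[
e_{k+1}=\Phi^{\theta}_{x,k}(y_k;\omega)-\Phi^{\theta}_{x,k}(y_k^*;\omega)-d_k^{\theta}.
\]
We then define the local linearization residual
\[
r_k:=\Phi^{\theta}_{x,k}(y_k;\omega)-\Phi^{\theta}_{x,k}(y_k^*;\omega)-A_k^{\theta}e_k=(\bar A_k^{\theta}-A_k^{\theta})e_k .
\]
This turns the recursion into the affine form \(e_{k+1}=A_k^{\theta}e_k-d_k^{\theta}+r_k\).

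Next we unroll this recursion from \(e_0=0\), by induction on \(k\), exactly as in the proof of Theorem~\ref{thm:exact-transport}. This gives
\[
e_N=-\sum_{k=0}^{N-1}P^{\theta}_{k+1:N}d_k^{\theta}+\sum_{k=0}^{N-1}P^{\theta}_{k+1:N}r_k .
\]
It identifies \(R_N:=\sum_{k}P^{\theta}_{k+1:N}r_k\), which is Equation~\eqref{eq:linearized-terminal}. The remaining work is to bound the two factors of each term separately.

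For the residual, we use the second-order Taylor formula with integral remainder along the segment \(y_k^*+\tau e_k\), \(\tau\in[0,1\)]. Equivalently, we differentiate the secant integrand once more:
\[
\bar A_k^{\theta}-A_k^{\theta}=\int_0^1\!\big(D_y\Phi(y_k^*+\tau e_k)-D_y\Phi(y_k^*)\big)\,d\tau,
\qquad
\big\|D_y\Phi(y_k^*+\tau e_k)-D_y\Phi(y_k^*)\big\|_{\mathrm{op}}\le B_k\tau\|e_k\|_2 .
\]
Integrating \(\tau\) over \([0,1]\) produces the factor \(1/2\), so \(\|r_k\|_2\le \tfrac12 B_k\|e_k\|_2^2\).

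For the transport factor, each \(A_j^{\theta}\) is the Jacobian evaluated at \(y_j^*\). This point is an endpoint of the \(j\)-th state segment, so the hypothesis gives \(\|A_j^{\theta}\|_{\mathrm{op}}\le L_j\). Submultiplicativity of the operator norm then yields \(\|P^{\theta}_{k+1:N}\|_{\mathrm{op}}\le\prod_{j=k+1}^{N-1}L_j\), with the empty product equal to one when \(k=N-1\). Combining the two bounds with the triangle inequality gives Equation~\eqref{eq:linearization-remainder-bound}.

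The main point needing care is the residual bound. The second-derivative hypothesis must hold on the whole segment from \(y_k^*\) to \(y_k\), not only at the reference state; this is exactly why the statement phrases the bounds "on the relevant state segment." I would check that the \(C^2\) assumption makes the mean-value/integral argument valid in the augmented space \(\mathcal{Y}\), with \(D_y^2\Phi\) viewed as a bilinear map whose operator norm is \(B_k\). A second, bookkeeping-level concern is to keep the non-commuting products in the order \(A_{N-1}\cdots A_{k+1}\) while unrolling. As a consistency check, \(r_0=0\) because \(e_0=0\), so the bound correctly has no \(k=0\) contribution beyond zero.
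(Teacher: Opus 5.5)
Your proposal is correct and follows the paper's proof. The paper likewise writes $e_{k+1}=A_k^{\theta}e_k-d_k^{\theta}+r_k$ with $\|r_k\|_2\le\frac{1}{2}B_k\|e_k\|_2^2$ from Taylor's theorem at $y_k^*$, unrolls from $e_0=0$ to get $R_N=\sum_{k}P^{\theta}_{k+1:N}r_k$, and concludes with the triangle inequality and $\|P^{\theta}_{k+1:N}\|_{\mathrm{op}}\le\prod_{j=k+1}^{N-1}L_j$; your derivation of the residual bound through $(\bar A_k^{\theta}-A_k^{\theta})e_k$ just spells out that Taylor step.
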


\begin{proof}
  Taylor's theorem at \(y_k^*\) gives
  \begin{equation}
    e_{k+1}=A_k^{\theta}e_k-d_k^{\theta}+r_k,
    \qquad
    \|r_k\|_2\leq \frac{1}{2}B_k\|e_k\|_2^2.
    \label{eq:linearized-one-step}
  \end{equation}
  Unrolling Equation~\eqref{eq:linearized-one-step} with \(e_0=0\) yields
  \[
  R_N=\sum_{k=0}^{N-1}P^{\theta}_{k+1:N}r_k.
  \]
  The claimed bound follows from the triangle inequality,
  \(\|P^{\theta}_{k+1:N}\|_{\mathrm{op}}
  \leq\prod_{j=k+1}^{N-1}L_j\), and the bound on \(r_k\).
\end{proof}

Equation~\eqref{eq:linearization-remainder-bound} shows exactly when the approximation can fail:
curvature at step \(k\) is multiplied by the squared error already present at that step and then
transported through the remaining dynamics.  Thus the reference-Jacobian approximation is a local
statement.  At a large student--reference gap, the exact secant identity remains valid but the
quadratic remainders need not be small.

\subsection{Random samplers and coarse--fine coupling}
\label{app:stochastic-coupling}

\paragraph{State derivatives of a realized update.}
Fixing the random driver leaves an ordinary map of the augmented state.  For example, an
Euler--Maruyama latent update may take the form
\begin{equation}
  z_{k+1}=z_k+h_k b_k(y_k)
  +\sum_m \sigma_{k,m}(y_k)\,\Delta W_{k,m}.
  \label{eq:euler-maruyama}
\end{equation}
Conditional on the realized increments, its state derivative contains
\begin{equation}
  I+h_k\partial_z b_k
  +\sum_m \Delta W_{k,m}\,\partial_z\sigma_{k,m}.
  \label{eq:euler-maruyama-jacobian}
\end{equation}
The derivatives with respect to the memory state and the derivatives of the memory refresh are
included in the augmented Jacobian.  We differentiate the realized finite-step map with respect to
its state; we do not differentiate a Brownian path with respect to time.  The smoothness assumption
of Theorem~\ref{thm:exact-transport} applies to this complete realized map.

\paragraph{Coupling across grids.}
For a genuine SDE discretization, a common Brownian path can be
constructed on the union grid \(\{\tau_j\}\) of the candidate and reference
boundaries. Draw independent \(\eta_j\sim\mathcal{N}(0,I)\) and set
\begin{equation}
  \Delta W_j=\sqrt{\tau_{j+1}-\tau_j}\,\eta_j,
  \qquad
  \Delta W_k^{\theta}
  =\sum_{j:\,[\tau_j,\tau_{j+1}]\subseteq[t_k,t_{k+1}]}
  \Delta W_j.
  \label{eq:coupling}
\end{equation}
Summing the same fine increments over each coarse interval preserves the correct marginal law on
both grids and produces a common-path coupling.\footnote{M.~B. Giles, Multilevel Monte Carlo Path
Simulation, \emph{Operations Research}, 56(3):607--617, 2008.} Additional grid points
can be introduced using Brownian bridges conditional on the already fixed path values.

For a discrete Gaussian-reinjection sampler, the coupling instead
specifies how a shared collection of standard Gaussian draws is assigned
to the two rollouts while preserving each rollout's marginal law.  Matching draws by their index is
one valid choice.  Reinjection amplitudes and shifted evaluation times remain part of
\(\Phi^{\theta}_{k}\). Such a coupling need not represent one Brownian path: the transport theorem
concerns the deployed finite-step maps and does not require an underlying SDE convergence argument.

\paragraph{Numerical references.}
The reference states in the transport identity may themselves be
numerical. A refinement containing the candidate boundaries, or
an interpolation supplying both latent and memory components,
defines the required states. The resulting reference discrepancy
is absorbed into \(d_k^{\theta}\).
When connecting this decomposition to \(F_\alpha\), the reference
endpoint must be the same \(Z_x^*\) used to evaluate that objective.
The algebra is valid for any such reference under a fixed coupling;
it does not identify a numerical teacher with the exact solution of
a continuous-time process.

\subsection{From terminal latent error to decoder KL}
\label{app:decoder-geometry}

\begin{theorem}[Local decoder pullback]
  \label{thm:decoder}\label{thm:decoder-geometry}
  Let \(Z_x^*\) and \(Z_x(\theta)\) be the reference and candidate terminal latent states for example
  \(x\). At retained token position \(i\), let \(\ell_{i,x}(Z)\in\mathbb{R}^{V}\) be the decoder logits,
  \(q_{i,x}(Z)=\operatorname{softmax}(\ell_{i,x}(Z))\), and
  \(q_{i,x}^*:=q_{i,x}(Z_x^*)\).  With fixed weights \(w_i\geq 0\) satisfying
  \(\sum_i w_i=1\), define the forward teacher-to-candidate KL
  \begin{equation}
    K_x(Z):=\sum_i w_i
    \operatorname{KL}\!\left(q_{i,x}^*\,\middle\|\,q_{i,x}(Z)\right).
    \label{eq:decoder-kl-definition}
  \end{equation}
  If every \(\ell_{i,x}\) is differentiable at \(Z_x^*\), then, as \(e\to 0\),
  \begin{equation}
    K_x(Z_x^*+e)
    =\frac{1}{2}e^\top W_{\mathrm{dec},x}e+o(\|e\|_2^2),
    \label{eq:local-decoder-kl}
  \end{equation}
  where
  \begin{equation}
    W_{\mathrm{dec},x}
    :=\sum_i w_i J_{i,x}^\top F(q_{i,x}^*)J_{i,x},
    \qquad
    J_{i,x}:=D_Z\ell_{i,x}(Z_x^*),
    \label{eq:decoder-pullback}
  \end{equation}
  and
  \[
  F(q):=\operatorname{diag}(q)-qq^\top.
  \]
  In particular, \(W_{\mathrm{dec},x}\) is positive semidefinite.
\end{theorem}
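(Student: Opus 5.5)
The plan is to expand the KL as a function of the logits and then compose that expansion with the first-order expansion of the logits. I will work one position \(i\) at a time and sum with the weights \(w_i\) at the end.

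For fixed \(q^*=\operatorname{softmax}(\ell^*)\), define \(\kappa(\ell):=\operatorname{KL}(q^*\,\|\,\operatorname{softmax}(\ell))\). Since \(\sum_v q^*_v=1\), this simplifies to
\[
\kappa(\ell)=\sum_v q^*_v\log q^*_v-\langle q^*,\ell\rangle+\operatorname{logsumexp}(\ell).
\]
This function is smooth (indeed \(C^\infty\)) in \(\ell\). Its derivatives at \(\ell^*\) are:
\begin{itemize}
  \item \(\kappa(\ell^*)=0\);
  \item the gradient is \(\operatorname{softmax}(\ell^*)-q^*=0\);
  \item the Hessian is the Hessian of logsumexp, which equals \(\operatorname{diag}(q^*)-q^*q^{*\top}=F(q^*)\).
\end{itemize}
Taylor's theorem therefore gives
\[
\kappa(\ell^*+\delta)=\tfrac12\delta^\top F(q^*)\delta+o(\|\delta\|_2^2)\qquad\text{as } \delta\to0.
\]
The function \(F(q^*)\) is positive semidefinite, since \(u^\top F(q^*)u=\sum_v q^*_v u_v^2-(\sum_v q^*_v u_v)^2\) is a variance under \(q^*\).

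Next I compose with the logit map. Differentiability of \(\ell_{i,x}\) at \(Z_x^*\) means
\[
\delta_i(e):=\ell_{i,x}(Z_x^*+e)-\ell_{i,x}(Z_x^*)=J_{i,x}e+\rho_i(e),\qquad \|\rho_i(e)\|_2=o(\|e\|_2).
\]
In particular \(\|\delta_i(e)\|_2=O(\|e\|_2)\), so the \(o(\|\delta_i\|^2)\) remainder from the logit expansion is \(o(\|e\|^2)\).

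The main care point is the quadratic term. Writing \(G:=F(q^*_{i,x})\), it expands as
\[
\tfrac12(Je+\rho)^\top G(Je+\rho)=\tfrac12 e^\top J^\top G J e+\rho^\top G J e+\tfrac12\rho^\top G\rho .
\]
Both cross terms are bounded by \(\|G\|_{\mathrm{op}}\) times \(o(\|e\|)\cdot O(\|e\|)\) or \(o(\|e\|)^2\), hence are \(o(\|e\|_2^2)\).

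This step is the only subtlety. We are given mere differentiability, not \(C^2\), so I cannot Taylor-expand the composite \(Z\mapsto\kappa(\ell(Z))\) to second order directly. The argument must expand in logit space first and then use only the first-order expansion of \(\ell\), exactly as above.

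Finally, I sum over the finitely many retained positions with weights \(w_i\). A finite weighted sum of \(o(\|e\|_2^2)\) terms is still \(o(\|e\|_2^2)\), which yields Equation~\eqref{eq:local-decoder-kl} with \(W_{\mathrm{dec},x}\) as in Equation~\eqref{eq:decoder-pullback}.

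Positive semidefiniteness follows from
\[
e^\top W_{\mathrm{dec},x}e=\sum_i w_i (J_{i,x}e)^\top F(q^*_{i,x})(J_{i,x}e)\ge0,
\]
using \(w_i\ge0\) and the PSD property of each \(F(q^*_{i,x})\) established above.
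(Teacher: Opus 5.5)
Your proposal is correct and follows essentially the same route as the paper. You write the per-position KL in logit space as $\operatorname{logsumexp}(\ell)-\langle q^*,\ell\rangle+\text{const}$, which is the paper's $\psi(a^*+\delta a)-\psi(a^*)-(q^*)^\top\delta a$. You then use that its gradient vanishes and its Hessian is $F(q^*)$ at $\ell^*$, substitute $\delta a=J_{i,x}e+o(\|e\|_2)$, sum with the weights, and obtain positive semidefiniteness from the variance identity. Your explicit treatment of the cross terms, and your remark that only first-order differentiability of $\ell_{i,x}$ is needed, fill in details the paper leaves implicit.
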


\begin{proof}
  For one position, let \(a^*=\ell_{i,x}(Z_x^*)\),
  \(\delta a=\ell_{i,x}(Z_x^*+e)-a^*\), and
  \(\psi(a)=\log\sum_v\exp(a_v)\).  Direct substitution gives
  \begin{equation}
    \operatorname{KL}(q_{i,x}^*\|q_{i,x}(Z_x^*+e))
    =\psi(a^*+\delta a)-\psi(a^*)-(q_{i,x}^*)^\top\delta a.
    \label{eq:logitkl}
  \end{equation}
  Because \(\nabla\psi(a^*)=q_{i,x}^*\) and
  \(\nabla^2\psi(a^*)=F(q_{i,x}^*)\), Taylor expansion cancels the linear term.  Using
  \(\delta a=J_{i,x}e+o(\|e\|_2)\), then summing over positions, proves
  Equation~\eqref{eq:local-decoder-kl}.  Positive semidefiniteness follows from
  \begin{equation}
    b^\top F(q)b
    =\sum_v q_v b_v^2-\left(\sum_v q_v b_v\right)^2
    =\operatorname{Var}_{v\sim q}[b_v]\geq 0.
    \label{eq:psd}
  \end{equation}
  For sequence outputs, summing with the fixed token weights \(w_i \ge 0\)
  gives the stated \(W_{\mathrm{dec},x}\).
\end{proof}

The pullback in Equation~\eqref{eq:decoder-pullback} identifies latent directions to which the
decoder is locally sensitive.  It is not a finite-distance identity.  The following two results
describe the measured KL away from the reference without replacing it by the local Fisher
quadratic.

\begin{proposition}[Finite logit-displacement bound]
  \label{prop:finite-logit-bound}
  For natural logarithms and arbitrary candidate logits,
  \begin{equation}
    K_x(Z)
    \leq\frac{1}{4}\sum_i w_i
    \|\ell_{i,x}(Z)-\ell_{i,x}(Z_x^*)\|_2^2.
    \label{eq:finite-logit-bound}
  \end{equation}
\end{proposition}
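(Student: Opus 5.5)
The plan is to prove the bound one position at a time and then average with the weights \(w_i\). I would start from the exact logit-space identity in Equation~\eqref{eq:logitkl}, which holds at any finite displacement and not only near the reference. Fix a position \(i\). Write \(a^*=\ell_{i,x}(Z_x^*)\), \(\delta=\ell_{i,x}(Z)-a^*\), and \(\psi(a)=\log\sum_v\exp(a_v)\). That identity gives
\[
\operatorname{KL}\!\left(q_{i,x}^*\,\middle\|\,q_{i,x}(Z)\right)=\psi(a^*+\delta)-\psi(a^*)-\nabla\psi(a^*)^\top\delta .
\]
This is a Bregman divergence of \(\psi\). Since \(\psi\) is smooth on all of \(\mathbb{R}^V\), Taylor's theorem with integral remainder along \(a_s=a^*+s\delta\), \(s\in[0,1]\), yields
\[
\operatorname{KL}=\int_0^1(1-s)\,\delta^\top F(q_s)\,\delta\,ds,
\qquad q_s:=\operatorname{softmax}(a_s).
\]
Here I use \(\nabla^2\psi(a)=F(\operatorname{softmax}(a))\), as in the proof of Theorem~\ref{thm:decoder}. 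No differentiability of the decoder itself is needed, which is why the statement allows arbitrary candidate logits.

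The key step is a bound on the curvature term that holds uniformly in \(s\). By Equation~\eqref{eq:psd}, \(\delta^\top F(q_s)\delta=\operatorname{Var}_{v\sim q_s}[\delta_v]\). I would bound this variance with a Popoviciu-type argument. Let \(M=\max_v\delta_v\) and \(m=\min_v\delta_v\), and put \(c=(M+m)/2\). Since the variance minimizes the mean squared deviation over all constants,
\[
\operatorname{Var}_{q_s}[\delta_v]\leq\mathbb{E}_{q_s}[(\delta_v-c)^2]\leq\left(\frac{M-m}{2}\right)^2 .
\]
If \(M\) and \(m\) are attained at the same coordinate, then all entries of \(\delta\) are equal and the variance is zero. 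Otherwise they sit at distinct coordinates, so \((M-m)^2\leq 2(M^2+m^2)\leq 2\|\delta\|_2^2\). Either way,
\[
\delta^\top F(q_s)\delta\leq\tfrac12\|\delta\|_2^2 \quad\text{for every } s.
\]

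Integrating, and using \(\int_0^1(1-s)\,ds=\tfrac12\), gives \(\operatorname{KL}\leq\tfrac14\|\delta\|_2^2\) at each position. Multiplying by \(w_i\geq0\) and summing over positions gives Equation~\eqref{eq:finite-logit-bound}. Natural logarithms enter only through \(\nabla^2\psi=F\); base-2 logarithms would add a factor of \(1/\ln 2\).

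The main obstacle is obtaining the constant \(1/4\) rather than \(1/2\). The crude bound \(\operatorname{Var}[\delta_v]\leq\mathbb{E}[\delta_v^2]\leq\|\delta\|_\infty^2\leq\|\delta\|_2^2\) only gives \(1/2\). The improvement comes from recentring at the midrange \(c\) and then using that the maximum and minimum occupy distinct coordinates.
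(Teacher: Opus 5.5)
Your proof is correct and has the same skeleton as the paper's. Both write the logit-space Bregman divergence in exact Taylor-integral form, bound the curvature uniformly by $\delta^\top F(q)\delta\le\tfrac12\|\delta\|_2^2$, and use the factor $\int_0^1(1-\tau)\,d\tau=\tfrac12$.

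The only difference is how the curvature bound is justified. The paper observes that the absolute row sums of the symmetric matrix $F(q)$ are $2q_v(1-q_v)\le\tfrac12$, which gives $\|F(q)\|_{\mathrm{op}}\le\tfrac12$ in one line. You instead use the variance form, Popoviciu's inequality, and the fact that the maximum and minimum of $\delta$ sit at distinct coordinates.

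Your route is a little longer, but it yields the intermediate bound $\operatorname{KL}\le\tfrac18(\max_v\delta_v-\min_v\delta_v)^2$. This bound is invariant under adding a constant to all logits, as the KL itself is, and it is never weaker than the stated $\ell_2$ bound. The row-sum argument, by contrast, goes straight to the operator-norm bound the proposition needs.
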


\begin{proof}
  For one position, Taylor's integral formula applied to \(\psi\) gives
  \[
  \operatorname{KL}(q^*\|q)
  =\int_0^1(1-\tau)\,
  \delta a^\top F(\operatorname{softmax}(a^*+\tau\delta a))
  \delta a\,d\tau.
  \]
  The absolute row sums of \(F(q)\) are \(2q_v(1-q_v)\leq 1/2\), so
  \(\|F(q)\|_{\mathrm{op}}\leq 1/2\).  Integrating \(1-\tau\) over \([0,1]\) gives the factor
  \(1/4\).  Summing with the fixed weights proves Equation~\eqref{eq:finite-logit-bound}.
\end{proof}

\begin{proposition}[Exact decomposition along a finite latent path]
  \label{prop:path-decoder}
  Let \(e=Z_x(\theta)-Z_x^*\), \(\gamma(\tau)=Z_x^*+\tau e\), and assume the decoder logits are
  twice continuously differentiable along \(\gamma\).  Write
  \[
  \ell_{i,\tau}:=\ell_{i,x}(\gamma(\tau)),
  \qquad
  q_{i,\tau}:=\operatorname{softmax}(\ell_{i,\tau}),
  \qquad
  q_i^*:=q_{i,0},
  \]
  and let dots denote derivatives with respect to \(\tau\).  Define
  \begin{align}
    g_\gamma(\tau)
    &:=\sum_i w_i\dot\ell_{i,\tau}^\top
    F(q_{i,\tau})\dot\ell_{i,\tau}, \\
    r_\gamma(\tau)
    &:=\sum_i w_i(q_{i,\tau}-q_i^*)^\top\ddot\ell_{i,\tau}.
    \label{eq:path-terms}
  \end{align}
  Then
  \begin{equation}
    K_x(Z_x(\theta))=G_\gamma+R_\gamma,
    \qquad
    G_\gamma:=\int_0^1(1-\tau)g_\gamma(\tau)\,d\tau,
    \qquad
    R_\gamma:=\int_0^1(1-\tau)r_\gamma(\tau)\,d\tau.
    \label{eq:path-decomposition}
  \end{equation}
  Here \(G_\gamma\geq 0\), whereas \(R_\gamma\) can have either sign.
\end{proposition}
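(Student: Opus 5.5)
The plan is to reduce everything to a scalar function of \(\tau\) along the segment \(\gamma\) and apply Taylor's theorem with integral remainder at \(\tau=0\). For each retained position \(i\), set
\[
\kappa_i(\tau):=\operatorname{KL}\!\left(q_i^*\,\middle\|\,q_{i,\tau}\right)=\psi(\ell_{i,\tau})-\psi(\ell_{i,0})-(q_i^*)^\top(\ell_{i,\tau}-\ell_{i,0}),
\]
which is the identity \eqref{eq:logitkl} from the proof of Theorem~\ref{thm:decoder-geometry}, now applied with \(\delta a=\ell_{i,\tau}-\ell_{i,0}\). Since the logits are \(C^2\) along \(\gamma\) and \(\psi\) is smooth, \(\kappa_i\) is \(C^2\) on \([0,1]\). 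Moreover \(\kappa_i(0)=0\). Taylor's formula with integral remainder then gives
\[
\kappa_i(1)=\kappa_i(0)+\dot\kappa_i(0)+\int_0^1(1-\tau)\ddot\kappa_i(\tau)\,d\tau .
\]

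The key step is the differentiation. Using \(\nabla\psi(a)=\operatorname{softmax}(a)\), the first derivative is
\[
\dot\kappa_i(\tau)=(q_{i,\tau}-q_i^*)^\top\dot\ell_{i,\tau},
\]
which vanishes at \(\tau=0\) because \(q_{i,0}=q_i^*\). This is the same cancellation of the linear term that appears in Theorem~\ref{thm:decoder-geometry}.

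Differentiating once more, the product rule gives two terms. The first comes from differentiating \(q_{i,\tau}=\nabla\psi(\ell_{i,\tau})\); by the chain rule and \(\nabla^2\psi(a)=F(\operatorname{softmax}(a))\), its derivative is \(F(q_{i,\tau})\dot\ell_{i,\tau}\). The second comes from differentiating \(\dot\ell_{i,\tau}\). Together,
\[
\ddot\kappa_i(\tau)=\dot\ell_{i,\tau}^\top F(q_{i,\tau})\dot\ell_{i,\tau}+(q_{i,\tau}-q_i^*)^\top\ddot\ell_{i,\tau}.
\]

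Substituting \(\kappa_i(0)=\dot\kappa_i(0)=0\) and this expression for \(\ddot\kappa_i\) into the Taylor formula, multiplying by \(w_i\), and summing over \(i\) (a finite sum, so it commutes with the integral) yields exactly \(K_x(Z_x(\theta))=G_\gamma+R_\gamma\) with the integrands \eqref{eq:path-terms}. The sign claims then follow directly. \(G_\gamma\ge0\) because \((1-\tau)\ge0\), \(w_i\ge0\), and \(b^\top F(q)b=\operatorname{Var}_{v\sim q}[b_v]\ge0\) by \eqref{eq:psd}. For \(R_\gamma\), note that it vanishes when the logits are affine in \(Z\) (since then \(\ddot\ell=0\)). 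To show it can be negative, I will use a one-dimensional two-token example: a curved logit path along which \(q_{i,\tau}-q_i^*\) and \(\ddot\ell_{i,\tau}\) have opposite signs gives \(R_\gamma<0\); reversing the curvature gives \(R_\gamma>0\).

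The only delicate point is regularity. I need \(\kappa_i\in C^2\) so that the integral form of the remainder is valid. This follows from the assumed \(C^2\) regularity of the logits composed with the smooth \(\psi\). Beyond that, the argument requires only bookkeeping of the chain rule through the softmax Hessian. I do not expect a genuine obstacle; constructing the explicit sign example is the step most likely to need care.
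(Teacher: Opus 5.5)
Your proposal is correct and follows the paper's proof: both restrict to the segment \(\gamma\), show the first derivative \((q_{i,\tau}-q_i^*)^\top\dot\ell_{i,\tau}\) vanishes at \(\tau=0\), identify the second derivative as \(g_\gamma+r_\gamma\), apply Taylor's integral formula, and get \(G_\gamma\ge 0\) from the positive semidefiniteness of \(F(q)\). Working per position before summing is only bookkeeping. Your two-token example (e.g.\ logits \((Z-Z^2/4,\,0)\) versus \((Z+Z^2,\,0)\) with \(Z^*=0\), \(e=1\)) goes slightly beyond the paper, which asserts the ``either sign'' claim for \(R_\gamma\) without exhibiting one.
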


\begin{proof}
  Let \(K(\tau):=K_x(\gamma(\tau))\).  Direct differentiation gives
  \[
  K'(\tau)=\sum_i w_i(q_{i,\tau}-q_i^*)^\top\dot\ell_{i,\tau}
  \]
  and therefore \(K''(\tau)=g_\gamma(\tau)+r_\gamma(\tau)\).  Since
  \(K(0)=K'(0)=0\), Taylor's integral formula yields
  Equation~\eqref{eq:path-decomposition}.  The nonnegativity of \(G_\gamma\) follows from the
  positive semidefiniteness of \(F(q)\).
\end{proof}

Theorem~\ref{thm:decoder} describes the infinitesimal geometry at the reference.
Proposition~\ref{prop:path-decoder} is an exact identity for a finite latent displacement and exposes
the additional curvature of the decoder logits along that path.  Proposition~\ref{prop:finite-logit-bound}
is a global upper bound in logit space.  These statements are complementary and should not be used
interchangeably.

\subsection{Terminal normalization and the calibration objective}
\label{app:calibration-objective}

Let \(M_x\) select the terminal latent region on which fidelity is evaluated.  For unconditional generation, it selects all generated latent positions. For conditional generation, it selects the generated
suffix and excludes conditioning positions.  With a fixed floor \(\tau_A>0\), define
\begin{equation}
  A_x(\theta)
  :=\frac{\|M_x(Z_x(\theta)-Z_x^*)\|_2^2}{D_{A,x}},
  \qquad
  D_{A,x}:=\max\{\|M_x Z_x^*\|_2^2,\tau_A\}.
  \label{eq:normalized-latent-error}
\end{equation}
For the decoder term, the retained positions are fixed by the teacher: we include positions through
the teacher's first EOS token, inclusive, or the full generated region if no EOS appears.  Retained
positions have uniform weights summing to one.  This teacher-defined mask prevents an early student
termination from shortening its own evaluation range.

For a calibration set \(\mathcal{C}\), let
\[
\bar A(\theta):=\frac{1}{|\mathcal{C}|}\sum_{x\in\mathcal{C}}A_x(\theta),
\qquad
\bar K(\theta):=\frac{1}{|\mathcal{C}|}\sum_{x\in\mathcal{C}}
K_x(Z_x(\theta)).
\]
Given fixed positive scales \(A_{\mathrm{scale}}\) and \(K_{\mathrm{scale}}\), StepTune uses
\begin{equation}
  F_\alpha(\theta)
  :=\alpha\frac{\bar A(\theta)}{A_{\mathrm{scale}}}
  +(1-\alpha)\frac{\bar K(\theta)}{K_{\mathrm{scale}}},
  \qquad 0\leq\alpha\leq 1.
  \label{eq:calibration-objective}
\end{equation}
The masks, denominator floors, and scales are held fixed across candidate schedules.

For one example, define
\[
W_{A,x}:=\frac{M_x^\top M_x}{D_{A,x}},
\qquad
W_{\alpha,x}
:=\frac{2\alpha}{A_{\mathrm{scale}}}W_{A,x}
+\frac{1-\alpha}{K_{\mathrm{scale}}}W_{\mathrm{dec},x}.
\]
Combining the per-example objective
\(F_{\alpha,x}(\theta):=\alpha\,A_x(\theta)/A_{\mathrm{scale}}
+(1-\alpha)\,K_x(Z_x(\theta))/K_{\mathrm{scale}}\)
with Theorem~\ref{thm:decoder} yields the local form
\begin{equation}
  F_{\alpha,x}(\theta)
  =\frac{1}{2}(e_x^Z)^\top W_{\alpha,x}e_x^Z
  +o(\|e_x^Z\|_2^2).
  \label{eq:local-joint-objective}
\end{equation}
The latent term penalizes every selected latent direction, including directions to which the decoder
is locally insensitive; the KL term weights directions by their effect on token probabilities.

Let \(Q_z\) project the augmented terminal error onto its latent component.  Under matched
initialization, Theorem~\ref{thm:exact-transport} gives
\[
e_x^Z=-\sum_{k=0}^{N-1}u_{k,x}^{\theta},
\qquad
u_{k,x}^{\theta}:=Q_z\bar P^{\theta}_{k+1:N}d_k^{\theta}.
\]
Substitution into Equation~\eqref{eq:local-joint-objective} makes the interaction among intervals
explicit:
\begin{equation}
  F_{\alpha,x}(\theta)
  =\frac{1}{2}\sum_{k=0}^{N-1}\sum_{\ell=0}^{N-1}
  (u_{k,x}^{\theta})^\top W_{\alpha,x}u_{\ell,x}^{\theta}
  +o(\|e_x^Z\|_2^2).
  \label{eq:transported-defect-objective}
\end{equation}
The diagonal terms measure the contribution of individual transported defects; the off-diagonal
terms record reinforcement or cancellation between intervals.  StepTune evaluates the actual latent
error and decoder KL in Equation~\eqref{eq:calibration-objective}.  Equations
\eqref{eq:local-joint-objective}--\eqref{eq:transported-defect-objective} explain the geometry of
those two measurements but do not replace them during search.

\subsection{When a fixed schedule can transfer}
\label{app:schedule-reuse}

A reusable schedule need not produce similar terminal states for different inputs.  It must keep each
candidate trajectory close to its own paired reference.  We formalize this distinction for a fixed
schedule \(\theta\).

Let \(\xi=(x,z_0,\omega)\) denote the conditioning input, initial latent, and subsequent random
driver.  For two realizations \(\xi\) and \(\xi'\) with compatible state coordinates, define
\[
\Delta e_k:=e_k(\xi)-e_k(\xi'),
\quad
\Delta d_k:=d_k(\xi)-d_k(\xi'),
\quad
\Delta\bar A_k:=\bar A_k(\xi)-\bar A_k(\xi').
\]

\begin{proposition}[Cross-realization stability bound]
  \label{prop:reuse-bound}
  Suppose the conditions of Theorem~\ref{thm:exact-transport} hold for both realizations and each
  candidate is paired with a reference that shares its own augmented initialization.  Then
  \begin{equation}
    \|\Delta e_N\|_2
    \leq\sum_{k=0}^{N-1}
    \|\bar P_{k+1:N}(\xi)\|_{\mathrm{op}}
    \left(
    \|\Delta d_k\|_2
    +\|\Delta\bar A_k\|_{\mathrm{op}}\|e_k(\xi')\|_2
    \right).
    \label{eq:pairbound}
  \end{equation}
\end{proposition}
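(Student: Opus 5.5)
The plan is to work with the exact one-step recursion from the proof of Theorem~\ref{thm:exact-transport}, $e_{k+1}=\bar A_k e_k-d_k$, which holds separately for each realization $\xi$ and $\xi'$ because each candidate is paired with its own reference sharing its augmented initialization, so that $e_0(\xi)=e_0(\xi')=0$. Subtracting the two recursions and adding and subtracting $\bar A_k(\xi)e_k(\xi')$ gives
\begin{equation*}
  \Delta e_{k+1}=\bar A_k(\xi)\,\Delta e_k+\Delta\bar A_k\,e_k(\xi')-\Delta d_k .
\end{equation*}
This is a linear recursion for $\Delta e_k$ with transport operators $\bar A_k(\xi)$ taken from the first realization. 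The forcing term $f_k:=\Delta\bar A_k\,e_k(\xi')-\Delta d_k$ collects both the defect mismatch and the operator mismatch, the latter acting on the second realization's accumulated error.

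Next, unroll this recursion exactly as in Theorem~\ref{thm:exact-transport}, with $\Delta e_0=0$. This gives
\begin{equation*}
  \Delta e_N=\sum_{k=0}^{N-1}\bar P_{k+1:N}(\xi)\,f_k ,
\end{equation*}
where $\bar P_{k+1:N}(\xi)$ is the secant product of Equation~\eqref{eq:secant-product} for realization $\xi$, with $\bar P_{N:N}=I$. A short induction on $k$ confirms that the product ordering matches: at step $k+1$ the new term $f_k$ enters with coefficient $I=\bar P_{k+1:k+1}$, and earlier terms are left-multiplied by $\bar A_k(\xi)$.

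Finally, apply the triangle inequality, submultiplicativity $\|\bar P f\|_2\le\|\bar P\|_{\mathrm{op}}\|f\|_2$, and $\|f_k\|_2\le\|\Delta d_k\|_2+\|\Delta\bar A_k\|_{\mathrm{op}}\|e_k(\xi')\|_2$. This yields Equation~\eqref{eq:pairbound}.

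The only delicate point is choosing the asymmetric split, with $\bar A(\xi)$ multiplying $\Delta e_k$ and $\Delta\bar A_k$ multiplying $e_k(\xi')$, so that the bound involves exactly the quantities in the statement. The smoothness hypotheses enter only to guarantee that the secant identity $e_{k+1}=\bar A_k e_k-d_k$ is valid for both realizations. The assumption of compatible state coordinates is needed so that the differences $\Delta e_k$, $\Delta d_k$ and $\Delta\bar A_k$ are meaningful.
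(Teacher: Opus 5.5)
Your proposal is correct and follows the paper's proof essentially step for step. Like the paper, you subtract the two exact secant recursions, add and subtract $\bar A_k(\xi)e_k(\xi')$, and use $\Delta e_0=0$, which holds because each candidate shares its own reference's initialization. You then unroll and finish with the triangle inequality and the operator-norm bound.
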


\begin{proof}
  Subtract the two exact one-step recursions and add and subtract
  \(\bar A_k(\xi)e_k(\xi')\):
  \begin{equation}
    \Delta e_{k+1}
    =\bar A_k(\xi)\Delta e_k
    +\Delta\bar A_k e_k(\xi')-\Delta d_k.
    \label{eq:pairsplit}
  \end{equation}
  Although \(z_0\) may differ across realizations, each candidate and its own reference share their
  initial state.  Hence \(e_0(\xi)=e_0(\xi')=0\) and \(\Delta e_0=0\).  Unrolling
  Equation~\eqref{eq:pairsplit}, followed by the triangle inequality and submultiplicativity,
  proves Equation~\eqref{eq:pairbound}.
\end{proof}

The bound identifies a sufficient mechanism for reuse.  If the transported change in interval defects
is controlled, the transport operators do not vary too sharply, and the paired intermediate errors
remain small, then teacher-relative terminal errors can remain similar across otherwise different
trajectories.  Operator variation is multiplied by the paired error \(e_k(\xi')\), not by the absolute
reference state.  This permits a fixed grid to address recurring numerical-error structure while the
input and random seed continue to determine the generated content.

This is a conditional explanation, not a distribution-free guarantee.  Transfer of the calibration
objective additionally requires comparable normalization and decoder sensitivity across examples;
transfer to language quality requires the quality constraints and held-out evaluations described in
the main text.

\section{Experimental Details}
\label{app:experimental-setup}
\label{app:experimental-details}

This appendix gives the calibration and evaluation protocols for the reported generation results.
Calibration data are disjoint from validation and test data. Each selected schedule is frozen
before held-out evaluation and reused across inputs.

\subsection{Models, precision, and controlled variables}
\label{app:exp-common}
\label{app:models-sampling}

\paragraph{ELF checkpoints.}
The cross-scale and unconditional experiments use the PyTorch ELF checkpoints trained on
OpenWebText.  The ELF-B, ELF-M, and ELF-L checkpoints are taken at training steps \(95{,}085\),
\(76{,}070\), and \(57{,}051\), respectively.  Conditional generation uses the task-specific ELF-B
checkpoints for XSum at step \(39{,}800\) and WMT14 De\(\rightarrow\)En at step \(880{,}600\),
together with a frozen T5-small encoder.  ELF generation uses the T5-small tokenizer.  Standard
generation rollouts use bfloat16 model computation and float32 timestep grids.

\paragraph{What changes across candidate schedules.}
Within each deployment setting, all candidates use the same frozen checkpoint, sampler update,
guidance parameters, endpoint times, and number of model evaluations.  The candidate variable is
the placement of the interior timesteps.  Candidate schedules evaluated on the same calibration
example share its conditioning input and initial latent.  Random samplers additionally use the fixed
coupling specified for that experiment.  Thus paired differences within a setting isolate timestep
placement; the higher-NFE teacher is a separate reference configuration and may use a different NFE
or stochastic coefficient, as stated below.

\paragraph{Calibration, selection, and evaluation.}
One calibration query evaluates one complete grid on every example in the calibration panel.  Teacher
terminal latents and token distributions are computed once and cached.  Schedule search and all
quality gates use only the calibration panel.  The selected schedule is then frozen and evaluated on
disjoint validation seeds or held-out task examples. If no candidate satisfies the stated
feasibility requirements, the baseline schedule is retained.

We use $F=F_{1/2}$ from Equation~\eqref{eq:joint-discrepancy}.
Its components are the calibration means of relative squared terminal-latent error
and forward decoder KL, $\mathrm{KL}(q_{\mathrm{teacher}}\|q_{\mathrm{candidate}})$.
The latent region, teacher-EOS mask, and per-example normalizations follow
Appendix~\ref{app:theory}. The two normalization constants are fixed before
comparing candidates; their sources are specified below.

\subsection{Schedule parameterization and search budget}
\label{app:exp-search}

\paragraph{Four-region schedules.}
Conditional generation and the cross-scale control experiment use four contiguous trajectory
regions.  A softmax maps four boundary logits \(\beta=(\beta_1,\ldots,\beta_4)\), with the gauge
\(\beta_1=0\), to positive region widths and cumulative boundaries \(B_r\).  When allocations are
optimized, three additional free logits determine a four-way allocation softmax: we first assign one
interval to every region and distribute the remaining \(N-4\) intervals by largest-remainder
rounding, giving positive integers \(n_r\) with \(\sum_r n_r=N\).  Region \(r\) then contains
\(n_r\) equal-width intervals on \([B_{r-1},B_r]\), and the union of their left endpoints, followed
by the terminal endpoint \(1\), forms a grid with exactly \(N\) solver intervals.

\paragraph{Bayesian search for ELF.}
Unless a setting below fixes the allocation, the search has six free coordinates: three boundary
coordinates in \([-3,3]\) and three allocation coordinates in \([-3.5,3.5]\).  Every search begins
with Uniform and 15 scrambled Sobol proposals.  Later queries are issued in batches of four using a
Gaussian-process lower-confidence-bound acquisition function with a Mat\'ern-\(5/2\) kernel,
length scale \(0.25\), and exploration coefficient \(1.5\).  Teacher fidelity is the acquisition
objective.  In task experiments, measured quality constraints determine feasibility and may also
penalize predicted violations.  The reported schedule is selected only from fully evaluated
candidates.

\paragraph{Finite-bank search.}
ELF-B unconditional generation does not use the continuous parameterization.  At each NFE it evaluates a
fixed bank of 80 grids drawn from ELF's native LogitNormal grid family.

Table~\ref{app:tab:calibration-settings} summarizes the teacher and query budgets.  A budget is reported
per deployment NFE; each grid evaluation covers the entire calibration panel.

\begin{table}[!ht]
  \centering
  \caption{\textbf{ELF calibration configurations.}
  Budgets are per NFE. Each grid evaluation covers the full
  calibration panel; ELF-B unconditional calibration evaluates a fixed bank.}
  \label{app:tab:calibration-settings}
  \small
  \setlength{\tabcolsep}{4pt}
  \renewcommand{\arraystretch}{1.08}
  \begin{tabular*}{\linewidth}{@{\extracolsep{\fill}}lcccc@{}}
    \toprule
    Experiment & NFE & Panel size & Grid evaluations & Teacher NFE \\
    \midrule
    Schedule control & 16/32 & 16 & 160 & 64 \\
    Unconditional & 8/16/32 & 64 & 80 (bank) & 64 \\
    XSum & 16/32 & 64 & 160 & 64 \\
    XSum & 64 & 64 & 160 & 128 \\
    WMT14 & 16/32 & 16 & 160 & 64 \\
    WMT14 & 64 & 16 & 160 & 128 \\
    \bottomrule
  \end{tabular*}
\end{table}

\subsection{Cross-scale schedule-control experiment}
\label{app:exp-cross-scale}
\label{app:control-settings}

This experiment asks whether a schedule calibrated on one ELF model scale retains low
teacher-relative error on other scales.  We evaluate ELF-B, ELF-M, and ELF-L at deployment NFE
16 and 32.  All runs use sequence length \(1{,}024\), Euler ODE sampling, classifier-free-guidance
scale \(1\), self-conditioning-guidance scale \(3\), and noise scale \(2\).  The teacher uses the
same checkpoint as its paired candidate and a uniform NFE-64 grid.

For each deployment NFE, the calibration panel contains 16 ELF-M seeds and the validation panel
contains 160 disjoint seeds.  We run one 160-query search on ELF-M and retain the observed grid with
the lowest calibration \(F\); no task-quality criterion is used.  This produces exactly two schedules,
one for NFE 16 and one for NFE 32.  Each schedule is then applied unchanged to ELF-B, ELF-M, and
ELF-L on the validation panel.  No schedule is recalibrated for ELF-B or ELF-L.

For the latent term, terminal MSE is normalized by the paired teacher's terminal energy.  The decoder
term uses the forward KL through the teacher's first EOS token, inclusive.  For reporting on a given
model, NFE, and split, each component is divided by its mean over 32 native LogitNormal replays;
the native mean of the resulting joint objective is therefore one.  Search itself uses only the ELF-M
calibration normalizers.  Confidence intervals use 16,384 crossed-bootstrap draws: validation seeds
and native replays are resampled independently, and the reporting normalizers are recomputed within
each draw.

\subsection{Unconditional generation on ELF-B}
\label{app:exp-unconditional}
\label{app:unconditional-settings}

\paragraph{Sampling and calibration.}
We use ELF-B with sequence length \(1{,}024\) and the official SDE-inspired sampler.
Classifier-free-guidance and self-conditioning-guidance scales are \(1\) and \(3\), respectively.
The stochastic coefficient is \(\gamma=2.0\) at deployment NFE 8 and 16 and \(\gamma=1.5\) at
NFE 32; the final update is deterministic.  The paired teacher uses a uniform NFE-64 grid and
\(\gamma=1\).

For each deployment NFE, the calibration panel contains 64 seeds.  We evaluate 80 native
LogitNormal grids with parameters \((-1.5,0.8)\).  The bank means of latent error and decoder KL
define \(A_{\mathrm{scale}}\) and \(K_{\mathrm{scale}}\), respectively.  A grid is feasible only if
\(F<1\) and it satisfies all applicable requirements in Table~\ref{app:tab:quality-gates}.  Among the
feasible grids, let \(\mathrm{PPL}_{\min}\) be the lowest calibration GPT-2-Large perplexity.  We
retain grids with perplexity at most \(1.05\,\mathrm{PPL}_{\min}\), then select the retained grid with
the lowest \(F\).  Exact ties are resolved first by lower perplexity and then by candidate identifier.
One schedule is selected at each NFE.

\paragraph{Held-out evaluation.}
For every deployment NFE, the frozen \method{} schedule and the matched native baseline are each
evaluated in six independent runs of 1,000 generations.  At batch size one, the native baseline draws
a fresh LogitNormal grid for every example, while \method{} reuses its one frozen grid.  Both methods
use the same checkpoint, sampler configuration, and NFE.

We report GPT-2-Large perplexity and unigram entropy under the GPT-2 tokenizer.  We additionally
record repeated-trigram rate, distinct bigrams, generated length, EOS rate, and empty-output rate.
Paired uncertainty for perplexity is computed in token-negative-log-likelihood space and transformed
back to the reported scale.  The hierarchical bootstrap uses 16,384 replicates, resampling runs and
then examples within each sampled run.  The matched PyTorch baseline is the basis of the controlled comparison.

\begin{table}[!ht]
  \centering
  \caption{\textbf{ELF calibration quality requirements.}
  Subscript \(0\) denotes the bank reference for unconditional
  generation and the uniform student schedule for conditional generation.}
  \label{app:tab:quality-gates}
  \small
  \setlength{\tabcolsep}{4pt}
  \renewcommand{\arraystretch}{1.15}
  \begin{tabular*}{\linewidth}{@{\extracolsep{\fill}}lcc@{}}
    \toprule
    Quantity & Unconditional & Conditional \\
    \midrule
    Unigram entropy \(H\) & \(\geq 0.98H_0\) & -- \\
    Relative length \(L/L_0\) & \([0.90,1.10]\) & \([0.80,1.20]\) \\
    Repeated-trigram rate \(R_3\)
    & \(\leq\max(1.10R_{3,0},R_{3,0}+0.01)\)
    & \(\leq\max(1.50R_{3,0},R_{3,0}+0.01)\) \\
    EOS rate & \(\geq\mathrm{EOS}_0-0.05\)
    & \(\geq\mathrm{EOS}_0-0.05\) \\
    Empty-output rate & \(\leq\mathrm{Empty}_0+0.01\)
    & \(\leq\mathrm{Empty}_0+0.01\) \\
    \bottomrule
  \end{tabular*}
\end{table}

\subsection{Conditional generation on ELF-B}
\label{app:exp-conditional}
\label{app:conditional-settings}

\paragraph{Shared sampling configuration.}
XSum and WMT14 De\(\rightarrow\)En keep the conditioning positions fixed and generate a
64-token suffix.  Candidate rollouts use Euler ODE sampling, classifier-free-guidance scale \(2\),
self-conditioning-guidance scale \(1\), and decoder scale \(1\).  XSum uses 64 training pairs for
calibration; WMT14 uses 16.  The paired teacher is Uniform at NFE 64 for deployment NFE 16 and 32,
and Uniform at NFE 128 for deployment NFE 64.  For each task and NFE, the corresponding Uniform
student's mean latent error and decoder KL define the two scales in
Equation~\eqref{eq:joint-discrepancy}.

\paragraph{Search space and budget.}
All six conditional settings use Mixed6, optimizing both the three interior
region boundaries and the step allocations. Each search evaluates 160 grids
using the initialization and Gaussian-process configuration in
Appendix~\ref{app:exp-search}.

\paragraph{Feasibility.}
At calibration time, every XSum ROUGE-1/2/L score, or both WMT14 sentence BLEU and sentence
chrF, must be no more than \(0.5\) points below Uniform.  A candidate must also satisfy the
task-applicable requirements in Table~\ref{app:tab:quality-gates}.  Rather than accepting a constraint
from its point estimate alone, we require at least \(0.90\) paired-bootstrap support for the joint
feasibility event under 8,192 fixed bootstrap draws.

\paragraph{Final selection.}
Selection is performed only among feasible, fully evaluated candidates.  For XSum, the scalar
calibration-quality score is the equal-weight mean of Uniform-normalized ROUGE-1/2/L.  For WMT14,
it is the equal-weight mean of Uniform-normalized sentence BLEU, sentence chrF, corpus BLEU, and
corpus chrF.  Let the common best candidate maximize this score over the feasible observations.
The XSum near-best set retains a candidate only if each of its three ROUGE metrics has bootstrap
probability at least \(0.90\) of being no more than \(0.5\) points below the corresponding metric of
the common best candidate.  WMT14 applies the same \(0.90\) threshold to the joint BLEU/chrF event.
From this near-best set, we choose the candidate with the lowest \(F\), breaking exact ties by higher
calibration-quality score and then earlier query order.

\paragraph{Test evaluation.}
The selected schedule is evaluated on the complete XSum test set of 11,334 examples and the complete
WMT14 De\(\rightarrow\)En test set of 3,003 examples.  XSum reports ROUGE-1/2/L F1 with
stemming.  WMT14 reports SacreBLEU corpus BLEU and chrF.  The native baseline draws one
LogitNormal grid with parameters \((-1.5,0.8)\) for each ordered batch of 64; the \method{} grid is
fixed across all batches.  Paired comparisons use the same test examples and initial latents.  We use
16,384 item-bootstrap replicates, based on per-example ROUGE for XSum and sentence-level
translation scores for WMT14.  Repeated-trigram rate, length, EOS rate, and empty-output rate are
recorded as generation diagnostics.

\paragraph{Search time for the deployed schedules.}
The ELF NFE--16 schedules evaluated in Table~\ref{tab:conditional-generation} were obtained
with 160 search queries. On a single NVIDIA B200, search takes 0.57 GPU-minutes for WMT14
with 16 calibration examples and 6.79 GPU-minutes for XSum with 64 examples.
We measure wall-clock time with the model resident in accelerator memory, including schedule
rollouts, metric computation, and Bayesian optimization bookkeeping. The measurements exclude
checkpoint loading, reference-cache construction, and held-out evaluation. The selected schedules
are frozen and reused, so this one-time search cost is amortized over subsequent generation.

\paragraph{Inference latency and throughput.}
\label{app:exp-efficiency}
\label{app:efficiency-settings}
Latency and throughput are measured on one NVIDIA B200 with the model resident in accelerator
memory. Latency uses batch size one; throughput uses batch size 64. Five warm-up blocks precede
30 measured blocks, and method order is randomized across measured blocks. We synchronize the
accelerator around every timed block.

The timing boundary includes input transfer, condition encoding, latent construction, sampling,
model decoding, and CPU text decoding. It excludes model loading, dataset I/O, input tokenization,
grid lookup and transfer, metric computation, and offline calibration. Every compared schedule uses
the same timing boundary. Offline search cost is reported separately above and is not included in
per-example latency or throughput.

\paragraph{LD3 adaptation.}
LD3 is evaluated at deployment NFE 16, 32, and 64 with the same teacher NFEs used above.  For
each task, its 64-example calibration set is split equally between parameter fitting and checkpoint
selection.  The adaptation optimizes interval boundaries and model-evaluation times while preserving
conditioning positions and restricting terminal \(\ell_2\) error to the generated suffix.  The first
two epochs optimize only the boundaries; the next five optimize both grids.  Boundary parameters use
RMSprop with learning rate \(0.005\) and momentum \(0.9\).  Evaluation-time offsets use SGD with
learning rate \(0.1\).  The gradient norm is clipped at \(1\).  Optimization and checkpoint selection
use only terminal latent \(\ell_2\) error.  Image-specific LPIPS and relaxed-initial-noise operations
are omitted.

\paragraph{Objective-ablation variants.}
In Section~\ref{sec:objective-ablation}, A-only uses latent error $A$ to guide search
and selects the candidate with the lowest $A$; KL-only applies the same rule
to decoder KL. Pure-$F$ searches and selects using the joint discrepancy $F$,
whereas Pure-$Q$ searches for and selects the highest calibration quality $Q$.
Full uses the complete \method{} procedure: $F$ guides search under quality
constraints, and the lowest-$F$ schedule is selected from the feasible
near-best-quality set.

\paragraph{WMT14 BLEU conventions and aligned rescoring.}
For comparability with the BLEU scores reported in ELF, the main WMT14 experiments
follow ELF's evaluation implementation and compute SacreBLEU corpus BLEU with
\texttt{lowercase=True} and \texttt{use\_effective\_order=True}. The component ablation
uses the default SacreBLEU corpus configuration, with \texttt{lowercase=False} and
\texttt{use\_effective\_order=False}. Both configurations use the default \texttt{13a}
tokenization and exponential smoothing. All ablation variants are evaluated under the
same configuration on the complete 3,003-example WMT14 De\(\rightarrow\)En test set.
Comparisons within the ablation therefore use a consistent scoring convention;
absolute BLEU values should not be compared directly between the main results and
the ablation.

We additionally rescore the saved test outputs from all ten ablation panels using
SacreBLEU 2.6.0 and the main-results BLEU configuration, without changing schedules
or regenerating text. Table~\ref{app:tab:wmt-ablation-aligned} reports the aligned
results. The ordering by mean BLEU remains unchanged: Full, Pure-$F$, A-only,
KL-only, and Pure-$Q$, from highest to lowest. The component-ablation conclusions
thus also hold under the main-results scoring convention. The chrF configuration
is identical throughout, and the original chrF values are retained.

\begin{table}[!ht]
  \centering
  \caption{\textbf{Component ablation under the main WMT14 scoring convention.}
  Results are computed from the saved outputs on the complete 3,003-example test
  set. Entries are means \(\pm\) sample standard deviations across ten calibration
  panels; BLEU is rescored using the main-results configuration, while chrF is unchanged.}
  \label{app:tab:wmt-ablation-aligned}
  \small
  \setlength{\tabcolsep}{4pt}
  \renewcommand{\arraystretch}{1.08}
  \begin{tabular*}{\linewidth}{@{\extracolsep{\fill}}lcc@{}}
    \toprule
    Variant & BLEU & chrF \\
    \midrule
    A-only   & \(25.358 \pm 0.154\) & \(52.653 \pm 0.108\) \\
    KL-only  & \(25.354 \pm 0.222\) & \(52.648 \pm 0.142\) \\
    Pure-$F$ & \(25.519 \pm 0.122\) & \(52.771 \pm 0.067\) \\
    Pure-$Q$ & \(24.796 \pm 2.042\) & \(52.227 \pm 1.543\) \\
    Full     & \(\mathbf{25.731} \pm 0.400\) & \(\mathbf{52.918} \pm 0.247\) \\
    \bottomrule
  \end{tabular*}
\end{table}

\subsection{Generation experiments on AURORA-LM-S}
\label{app:exp-aurora}

These experiments assess \method{} on a second continuous DLM architecture using the released
AURORA-LM-S checkpoints. Both conditional and unconditional generation use the native tan-$d$
schedule as the baseline. We retain the model-specific sampling and decoding settings below.

\paragraph{XSum sampling and calibration.}
We use the released XSum checkpoint at training step 300,000 and the official conditional
preprocessing. Candidate schedules use Euler ODE sampling at NFE 16. The paired reference uses
the same frozen model and the native tan-$d$ schedule at NFE 32. We retain classifier-free-guidance
scale 2.0, the guidance noise window \([0.9,1.0]\), repainting, and a generated suffix of 64 tokens.
Calibration uses 16 examples and 160 queries in the six-dimensional four-region boundary/allocation
space. The two components of \(F\) are normalized by their calibration means under native
tan-$d$ sampling at NFE 16.

The initial design contains the native schedule and 15 scrambled Sobol proposals. Subsequent
proposals use a Gaussian-process lower confidence bound with an RBF kernel, length scale 0.22,
jitter \(10^{-6}\), and exploration coefficient 1.8. Proposals are selected from a global Sobol pool
augmented with local proposals around strong observed schedules. The search is guided by \(F\),
with measured quality violations penalized during calibration.

\paragraph{XSum quality gates and final selection.}
For this supporting evaluation, we use fixed baseline-relative quality gates: each of ROUGE-1,
ROUGE-2, and ROUGE-L must be at least 85\% of its value under native tan-$d$ sampling at NFE 16,
and the empty-output rate must not exceed 5\%. Among the fully evaluated schedules satisfying
these gates, we select the one with the lowest \(F\). No additional near-best-quality tolerance
is applied in this experiment. We freeze the selected schedule and reuse it in five independent
evaluation runs of 1,000 XSum examples, reporting mean ROUGE-1/2/L. The comparison in
Table~\ref{tab:unconditional-generation}b uses the native tan-$d$ schedule at NFE 32.

\paragraph{OpenWebText sampling and calibration.}
We use the released OpenWebText checkpoint at training step 250,000. Candidate and reference
rollouts use the official SDE-DPM++ sampler with self-conditioning-guidance scale 4.0 and
\texttt{eos\_wrap}. Deployment uses NFE 16; the reference uses the native tan-$d$ schedule at
NFE 64. Search uses 16 calibration generations, 40 queries, and the same six-dimensional
four-region search space. Initialization contains the native schedule and 15 scrambled Sobol points,
followed by RBF Gaussian-process lower-confidence-bound proposals. Search generations use
a maximum length of 256 tokens, while final evaluation uses the official 1,024-token protocol.

Relative to the native NFE--16 calibration baseline, the perplexity ratio must not exceed 1.35,
the unigram-entropy decrease must not exceed 0.15, the repeated-trigram-rate increase must not
exceed 0.05, and the empty-output rate must not exceed 0.01. These quality requirements constrain
the \(F\)-guided search. The selected schedule is frozen before evaluation.
\paragraph{OpenWebText evaluation.}
We evaluate the frozen schedule and native NFE--16 baseline with five independent seeds and
1,000 generations per seed. Perplexity is computed with GPT-2-Large, and unigram entropy uses
the GPT-2 tokenizer. MAUVE compares the generated text with the first 1,000 held-out OpenWebText
documents following the released AURORA evaluation recipe. We report the mean across the five
runs and additionally record repeated-trigram and empty-output rates as diagnostics.

\section{The Structure of the Calibrated Schedules}
\label{app:schedule-structure}

We summarize the eleven deployed schedules for the main generation results
using the fraction of evaluation times before $t=0.10$, $0.25$, and $0.50$.
Here, $t$ increases from the initial-noise endpoint $0$ to the data endpoint $1$.
For an NFE-$N$ grid, we count its $N$ left endpoints
$t_0,\ldots,t_{N-1}$, including $t_0=0$ and excluding the terminal endpoint $t_N=1$.
The final interval still ends at $1$, even when all evaluation times lie earlier.

\begin{table}[t]
  \centering
  \caption{\textbf{Temporal allocation of the deployed schedules.}
  Entries are percentages of the $N$ model-evaluation times strictly below
  each threshold, computed from the saved float32 grids.}
  \label{app:tab:schedule-concentration}
  \small
  \setlength{\tabcolsep}{5pt}
  \renewcommand{\arraystretch}{1.08}
  \begin{tabular*}{\linewidth}{@{\extracolsep{\fill}}llrrrr@{}}
    \toprule
    Model & Task & NFE & $t<0.10$ & $t<0.25$ & $t<0.50$ \\
    \midrule
    ELF-B & XSum & 16 & 93.8 & 100.0 & 100.0 \\
    ELF-B & XSum & 32 & 90.6 & 90.6 & 90.6 \\
    ELF-B & XSum & 64 & 92.2 & 92.2 & 92.2 \\
    ELF-B & WMT14 & 16 & 93.8 & 93.8 & 93.8 \\
    ELF-B & WMT14 & 32 & 78.1 & 81.3 & 87.5 \\
    ELF-B & WMT14 & 64 & 90.6 & 93.8 & 96.9 \\
    \midrule
    ELF-B & OpenWebText & 8 & 25.0 & 50.0 & 87.5 \\
    ELF-B & OpenWebText & 16 & 18.8 & 75.0 & 100.0 \\
    ELF-B & OpenWebText & 32 & 15.6 & 78.1 & 96.9 \\
    \midrule
    AURORA-LM-S & XSum & 16 & 37.5 & 87.5 & 87.5 \\
    AURORA-LM-S & OpenWebText & 16 & 6.3 & 25.0 & 81.3 \\
    \bottomrule
  \end{tabular*}
\end{table}

The ELF conditional schedules consistently concentrate evaluations near the
beginning of the trajectory: 78.1--93.8\% occur before $t=0.10$ across the
six settings. Their precise placements differ, but the preference for early
evaluations recurs across tasks and budgets. The unconditional schedules
also allocate 50.0--78.1\% of evaluations before $t=0.25$, although these
were selected from an already early-skewed LogitNormal candidate family.

AURORA-LM-S shows a more task-dependent pattern. Its XSum schedule places
87.5\% of evaluations before $t=0.25$. On OpenWebText, only 25.0\% lie before
that threshold, while 56.3\% lie in $[0.25,0.50)$. Thus, early concentration
is common in the tested settings, particularly on ELF, but is not a universal
shape shared by all selected schedules. These observations describe the
calibrated grids; they do not establish a generally optimal allocation rule.

\section{Additional Related Work}
\label{app:additional-related-work}
\label{app:related-work}

Beyond the methods discussed in the main text, several studies optimize
sampling through numerical-error objectives. Optimal Stepsize uses dynamic
programming to distill timestep schedules from reference
trajectories~\citep{pei2025optimal}. HSO combines a global search for
initialization with local refinement, using a midpoint-error proxy and a
penalty against excessively close timesteps~\citep{zhu2025hso}.
ART formulates time reparameterization as continuous-time control and learns
the sampling clock using an Euler-error surrogate and actor--critic
updates~\citep{huang2026art}. Concurrent work uses local-error estimates with proportional-integral (PI) step-size control and aggregates per-sample trajectories into reusable static schedules, with evaluations on image generation and CoBit language modeling \citep{kemperman2026adaptive}.

Other approaches optimize a broader set of sampler parameters or adapt the
grid to individual inputs. BNS distills a family of non-stationary solvers,
including their evaluation times, against accurate reference
solutions~\citep{shaul2024bespoke}. DDSS differentiates through sampling and
sample-quality scores to optimize a flexible sampler~\citep{watson2022ddss}.
INDIS learns instance-dependent discretizations for image and video
generation~\citep{yuan2026indis}.

These studies offer different objectives and search spaces for allocating
sampling computation. Our focus is timestep calibration for continuous
DLMs, where a decoder maps terminal representations to token distributions.
\method{} uses both latent error and actual decoder KL to guide calibration,
incorporates language-quality constraints, and reuses one selected schedule
across inputs without changing the model or solver.